\documentclass{article} % For LaTeX2e
\usepackage{times}
\usepackage{iclr2026_conference}

\usepackage{graphics} % for pdf, bitmapped graphics files
\usepackage{epsfig} % for postscript graphics files
\usepackage{mathptmx} % assumes new font selection scheme installed
\usepackage{times} % assumes new font selection scheme installed
\usepackage{amsmath} % assumes amsmath package installed
\usepackage{amssymb}  % assumes amsmath package installed
\usepackage{amsthm} % for theorem environments
\usepackage{thmtools}
\usepackage{thm-restate}

\usepackage[utf8]{inputenc} % allow utf-8 input
\usepackage[T1]{fontenc}    % use 8-bit T1 fonts
\usepackage{hyperref}       % hyperlinks
\usepackage{url}            % simple URL typesetting
\usepackage{booktabs}       % professional-quality tables
\usepackage{amsfonts}       % blackboard math symbols
\usepackage{nicefrac}       % compact symbols for 1/2, etc.
\usepackage{microtype}      % microtypography
\usepackage{xcolor}         % colors
\usepackage{algorithm}      % for algorithm environment
\usepackage{algorithmic}    % for algorithmic environment
\usepackage{caption}
\usepackage{subcaption}
\usepackage{standalone}
\usepackage{tikz}
\usepackage{pgfplots}
\pgfplotsset{compat=1.18}
\usepackage[english]{babel}
\usepackage{tabularx}
\usepackage{wrapfig,lipsum}
\usepackage{graphicx}

\newcommand{\xxnote}[3]{}
\ifx\hidenotes\undefined
  \renewcommand{\xxnote}[3]{\color{#2}{#1: #3}}
\fi

\newtheorem{corollary}{Corollary}

\newtheorem{lemma}{Lemma}

\title{
Difference-Aware Retrieval Policies for\\ Imitation Learning
}

\author{
  \textbf{Quinn Pfeifer$^{1}$, Ethan Pronovost$^{1}$, Paarth Shah$^{2}$, Khimya Khetarpal$^{3, 4}$,} \\
  \textbf{ Siddhartha Srinivasa$^{1}$, Abhishek Gupta$^{1, 2}$}
  \vspace{0.1in} \\
  \footnotesize{$^{1}$Paul G. Allen School of Computer Science \& Engineering, University of Washington} \\
  \footnotesize{$^{2}$Toyota Research Institute} \\
  \footnotesize{$^{3}$Google DeepMind} \\
  \footnotesize{$^{4}$Mila}
}
\iclrfinalcopy

\usepackage{xspace}

\newcommand{\ours}{DARP\xspace}
\newcommand{\oursexplicit}{MRIL\xspace}
\newcommand{\oursimplicit}{iMRIL\xspace}

\begin{document}

\maketitle
\begin{abstract}
Parametric imitation learning via behavior cloning can suffer from poor generalization to out-of-distribution states due to compounding errors during deployment. We show that reusing the training data during inference via a semi-parametric retrieval-based imitation learning approach can alleviate this challenge. We present \textbf{D}ifference-\textbf{A}ware \textbf{R}etrieval \textbf{P}olicies for Imitation Learning (\textbf{DARP}), a semi-parametric retrieval-based imitation learning approach that addresses this limitation by reparameterizing the imitation learning problem in terms of local neighborhood structure rather than direct state-to-action mappings. Instead of learning a global policy, DARP trains a model to predict actions based on $k$-nearest neighbors from expert demonstrations, their corresponding actions, and the relative distance vectors between neighbor states and query states. DARP requires no additional assumptions beyond those made for standard behavior cloning -- it does not require additional data collection, online expert feedback, or task-specific knowledge. We demonstrate consistent performance improvements of 15-46\% over standard behavior cloning across diverse domains, including continuous control and robotic manipulation, and across different representations, including high-dimensional visual features. Code and demos are available at \url{https://weirdlabuw.github.io/darp-site/}.
\end{abstract}
\section{Introduction}
\label{introduction}

\begin{figure}[htbp]
    % \vspace{-0.7cm}
    \centering
    \includegraphics[width=1.0\linewidth]{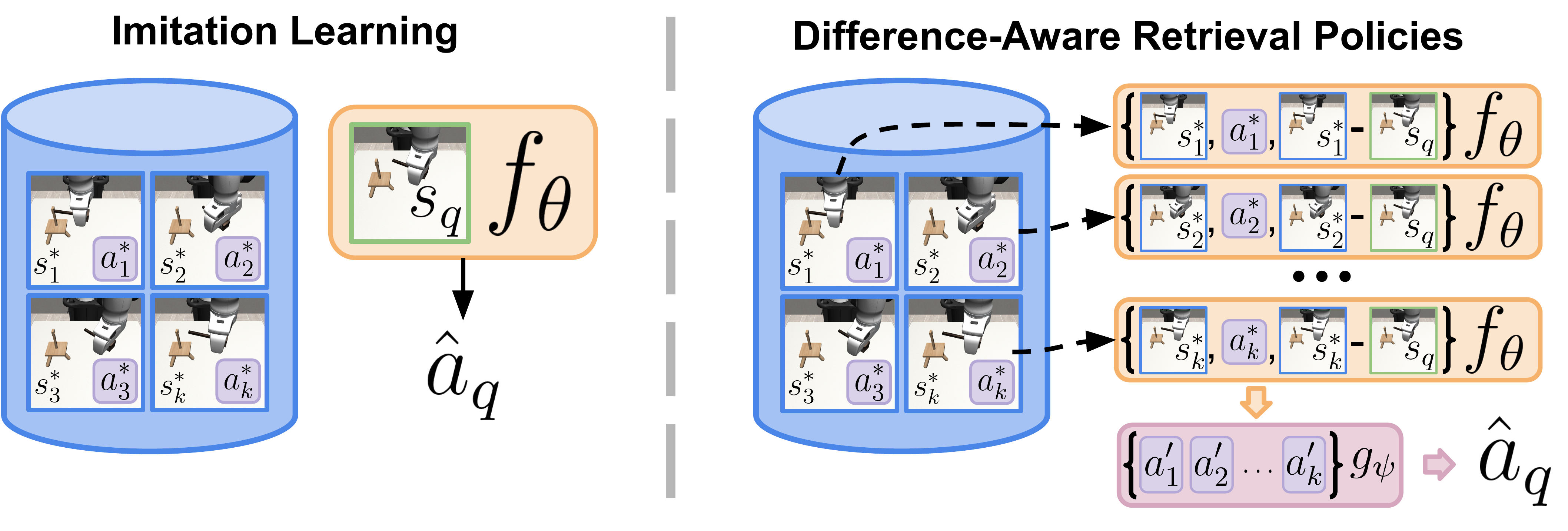}
    \caption{\footnotesize{\textbf{Overview of \ours:} Unlike standard BC (left), \ours (right) utilizes a retrieval-based reparameterization centered around difference vectors between query states and retrieved neighbors. In standard behavior cloning, the dataset of expert state-action pairs is used only for training and is discarded at inference-time, while \ours utilizes it to perform retrieval to find a local neighborhood of expert state-action pairs around each query point $s_q$.}}
    \label{fig:DAN_Teaser}
    % \vspace{-0.4cm}
\end{figure}

Imitation learning via behavior cloning (BC) \citep{pomerleau91} has enabled robots to learn complex, dexterous behaviors from expert demonstrations~\citep{zhao2023learning, chi2024diffusionpolicy, black2410pi0, chung2014accelerating}. Yet despite its simplicity, BC often proves brittle in practice, especially for long-horizon tasks~\citep{ross11dagger}. The core issue is \emph{covariate shift}: small errors accumulate during rollouts, driving the agent into states not well represented in the demonstration data~\citep{spencer21covariate, ross11dagger}. In such
out-of-distribution regions, BC policies are highly unstable, producing unreliable and high-variance behavior that frequently leads to failure.

This problem is well recognized, and many approaches have been proposed to mitigate compounding error~\citep{ross11dagger, venkatraman15dad, ke24ccil, levine16orl}. However, these typically go beyond the standard BC assumptions, requiring simulators, interactive experts, large quantities of sub-optimal data, or strong task-specific structure. By contrast, our goal is to remain in the pure BC regime: learn only from expert state--action pairs, with no additional supervision or feedback. The central question is thus: \emph{can we reduce the variance of BC policies using only the original demonstration dataset?}

From a statistical standpoint, BC minimizes only the supervised risk on expert states. This controls bias on the training distribution, but leaves variance unchecked: in low-density regions of the state space (which are often encountered during closed-loop rollouts), the learned policy can
oscillate arbitrarily. A natural remedy is to enforce \emph{smoothness}, so that nearby states yield similar predicted actions~\citep{kobayashi2022l2c2, asadi2018lipschitz, ke2024ccil, chen2024lcp}. This discourages spurious fluctuations and improves rollout stability. Several approaches to encourage smoothness have been explored, see related work in Section~\ref{related:smooth}. 
 
% Data
% augmentation perturbs demonstrations to broaden coverage; architectural priors such as Gaussian processes~\citep{hartz25gp, jaquier19gp} or Lipschitz-constrained networks~\citep{ducotterd24lipschitz} impose global
% continuity; trajectory-level penalties promote temporal smoothness~\citep{chaudhary22smooth}; and graph-based regularization methods penalize inconsistencies across a $k$-NN graph of examples~\citep{belkin2008towards, zhou2003learning}. 
Although sometimes effective, each has drawbacks: augmentation does not guarantee consistency,
global priors can blur distinct behaviors, temporal penalties only act along time (not space), and explicit graph regularizers require tuning extra smoothness hyperparameters.

A complementary line of work contrasts global and local learning. ``Global'' supervised models~\citep{black2410pi0, zhao2023learning, chi2024diffusionpolicy} attempt to compress the entire demonstration dataset into a single parametric function, which is typically brittle under distribution shift. ``Local''
methods~\citep{pari2022surprising, mansimov2018simple, salzbergAha1994} instead adapt predictions to the structure of the dataset itself, consulting neighborhoods of similar states and generating outputs from non-parametric or semi-parametric operations on the training distribution of
expert behavior. This locality offers robustness since it avoids reliance on a single parametric function, but it also has limitations: its effectiveness inherently depends on the distance metric, naive averaging of neighborhood can blur
distinct actions and struggle to represent multimodality, and treating neighbors only in terms of their absolute states can limit generalization.

We introduce \textbf{D}ifference-\textbf{A}ware \textbf{R}etrieval \textbf{P}olicies for Imitation Learning (\textbf{DARP}), which combines the robustness of local methods with the stability of regularized global policy learning. At inference time, rather than predicting actions to execute only from the current query state via a feedforward pass on a parametric function, \ours (Fig.~\ref{fig:DAN_Teaser}) first retrieves a set of $k$ neighbors from the training corpus and then predicts $k$ actions conditioned on tuples of (neighbor state, associated action, and difference from the query state). These neighbor-informed predictions are then aggregated in a permutation-invariant manner to produce a single robust action prediction. This design both grounds predictions in observed data (due to non-parametric retrieval) and implicitly enforces local consistency (due to parametric action prediction, conditional on the retrieved neighbors). We show that doing so reduces variance without requiring any additional assumptions beyond those made for standard Behavior Cloning -- we need no additional data, online supervision, or task-specific knowledge. In spectral terms, this form of neighbor aggregation approximates a Laplacian filter on the $k$-NN ($k$-nearest neighbor) graph of expert states, providing a parameter-free form of smoothing that adapts to the local density and geometry of the dataset.

We provide both theoretical and empirical evidence that while operating under the same requirements as behavior cloning, \ours improves performance considerably by reducing variance and enhancing robustness to distribution shift. Our analysis
formalizes the connection to Laplacian regularization, showing that \ours implicitly applies a fixed low-pass spectral filter that suppresses high-frequency variance. Empirically, on imitation learning evaluations, \ours achieves 15--46\% gains over typical behavior cloning across continuous control (MuJoCo), robotic manipulation (Robosuite, Robocasa), and high-dimensional visual imitation tasks (Robosuite with image state). We demonstrate that \ours is a general, scalable architecture that naturally extends to image-based domains, with rich policy classes like transformers and Gaussian mixture models. We perform a careful set of ablations to highlight the importance of our particular choice of representation and architecture, providing general-purpose insights into retrieval-based algorithms for sequential decision-making problems.
% Borrowing heavily from the other existing sections
\section{Difference-Aware Retrieval Policies for Imitation Learning}
In this work, we instantiate a new class of imitation learning methods that get the best of both ``global'' parametric learning methods and ``local'' learning methods. We propose a new architecture and simple training objective that allows for learning under the same requirements as typical behavior cloning, while providing significant improvements both theoretically and empirically. In this section, we thoroughly derive theoretical guarantees from first-principles. Readers primarily interested in the practical experimental results may skip to Section~\ref{sec:summary} for a self-contained summary of DARP before proceeding to Section~\ref{sec:experiments} for empirical results. \\

As a warmup, we define the problem setting (Section~\ref{sec:prelim}) and discuss a variant of regularized imitation learning (Section~\ref{sec:explicitreg}) that imposes additional structure from the data for improvements in variance, generalization, and stability. In Section~\ref{sec:implicitreg}, we then show how the benefits of explicitly regularized learning can be implicitly accomplished by modifying policy \emph{architecture} rather than the objective. Finally, in Section~\ref{sec:practical} we introduce our practical algorithm \ours, which realizes these benefits through a semi-parametric retrieval augmented architecture that can be generally applied to imitation learning with modern neural networks and generative modeling tools. 

\subsection{Preliminaries: Behavior Cloning for Imitation Learning}
\label{sec:prelim}

We operate in the typical imitation learning setting, formalized by a finite-horizon Markov Decision Process (MDP), $\mathcal{M}=\left\{\mathcal{S}, \mathcal{A}, P_0\right\}$, where $\mathcal{S}$ is the state space, $\mathcal{A}$ is the action space, and $P_0$ is the initial state distribution. A \textit{policy} maps a state to a distribution of actions $f_\theta: \mathcal{S} \rightarrow \Delta_\mathcal{A}$ so as to maximize task-relevant objectives (for brevity, we drop the subscript $\theta$ when discussing the general function class $f$). We assume access to expert human-provided demonstrations $\mathcal{D}^*$ as a collection of state-action pairs: $\mathcal{D}^*=\{(s^*_j,a^*_j)\}$. We use the notation $s^*$ and $a^*$ specifically to denote states and actions belonging to the expert dataset. The behavior cloning~\citep{pomerleau91} algorithm learns a policy $f_\theta$ from this dataset by casting imitation as a typical supervised learning problem --- $\arg \max_{\theta} \mathbb{E}_{(s^*, a^*) \sim \mathcal{D}^*} \left[\log (f_\theta(a^* \mid s^*))\right]$. While the distribution class of $f_\theta$ can be an arbitrary complex generative model~\citep{lipman2023flow, chi2024diffusionpolicy}, we will start with a Gaussian parameterization for the sake of simplicity\footnote{We show that this can be relaxed in Section~\ref{sec:practical}}.

\subsection{Warm-up: Neighbor Manifold Regularized Imitation Learning}
\label{sec:explicitreg}

While behavior cloning minimizes only the supervised imitation loss over expert states drawn from $\mathcal{D}^*$, such an objective alone does not control how the policy behaves on states that deviate from the manifold of expert states. In practice, accumulating errors lead the agent to out-of-distribution regions where a BC policy may act arbitrarily, especially for overparameterized neural networks \citep{ross2010efficient}. 

To mitigate this, we note that behavior cloning enforces function evaluations only at the training states, but it does not explicitly take into account the relationship between states (and their corresponding actions) in a neighborhood, thereby ignoring the underlying data manifold. To incorporate this information into policy learning, let us consider a modified objective that introduces a regularization term that explicitly encourages \emph{local consistency} of predictions: nearby states in the expert dataset should be mapped to similar actions. This intuition leads to the following neighborhood-regularized loss ($\mathcal{L}_{\mathrm{MRIL}}$), where the standard imitation learning objective ($\mathcal{L}_{\mathrm{BC}}$) is combined with an additional smoothness penalty ($\mathcal{L}_{S}$) enforcing predictions to respect the geometry of the dataset rather than relying solely on pointwise supervision.

\vspace{-15pt}

\begin{equation}
\label{eq:ic_objective}
\mathcal{L}_{\mathrm{MRIL}}(f) =
\underbrace{\mathbb{E}_{(s^*,a^*)\sim \mathcal{D}^*} \left[ \ell\big(f(s^*),a^*\big) \right]}_{\text{supervised risk} (\mathcal{L}_{\mathrm{BC}})}
+
\lambda 
\underbrace{\mathbb{E}_{s^*\sim \mathcal{D}^*} \left[
    \sum_{i \in \mathcal{N}_k(s^*)} w_i(s^*)\,
    \big\| f(s^*) - f(s_i^*) \big\|_2^2
\right]}_{\text{smoothness regularizer} (\mathcal{L}_{\mathrm{S}})},
\end{equation}
where $\ell(f(s^*),a^*)$ is the supervised imitation loss, $\mathcal{N}_k(s^*)$ are the $k$-nearest neighbors of $s^*$ from the expert dataset, and the weights $w_i(s^*)$ are normalized kernel weights based on the state differences --- $w_i(s^*) \;\propto\; K_\Delta\!\left(\frac{\|s_i^* - s^*\|}{h}\right)$. As we discuss briefly below (and in detail in Appendix~\ref{app:proof_mril}), this corresponds to a form of \emph{manifold regularization} or Laplacian smoothing, where the policy is penalized for high-frequency variation across the neighborhood of expert states. This manifold regularization provably leads to improvements in policy variance, stability, and generalization. 

\begin{restatable}[Manifold Regularized BC ($\mathcal{L}_{\mathrm{MRIL}}$) improves over vanilla BC ($\mathcal{L}_{\mathrm{BC}}$)]{retheorem}{restatemriltheorem}
\label{prop:ours_vs_bc}
Let $f^*:\mathcal{S}\to\mathcal{A}$ be the true, underlying expert policy, assumed to be $C^2$-smooth on a compact state space $\mathcal{S}$. Let $f:\mathcal{S}\to\mathcal{A}$ denote the learned policy estimator. Consider two estimators trained on
expert demonstrations:
\begin{enumerate}
    \item \textbf{Vanilla BC:} a global supervised model minimizing
    \[
    \mathcal{L}_{\mathrm{BC}}(f) \;=\;
    \mathbb{E}_{(s^*,a^*)\sim \mathcal{D}^*}[\ell(f(s^*),a^*)].
    \]
    \item \textbf{\oursexplicit:} a neighbor-based estimator minimizing
    \[
    \mathcal{L}_{\mathrm{MRIL}}(f) \;=\;
    \mathcal{L}_{\mathrm{BC}}(f) \;+\; 
    \lambda \mathbb{E}_{s^*\sim \mathcal{D}^*} \left[
    \sum_{i \in \mathcal{N}_k(s^*)} w_i(s^*)\,
    \big\| f(s^*) - f(s_i^*) \big\|_2^2
\right] ,
    \]
    where $w_i(s^*)$ are the kernel weights defined above and $\lambda>0$.
\end{enumerate}

Then, under the smoothness assumption on $f$, the following hold:
\begin{enumerate}
    \item[(i)] \emph{Variance reduction:} The Laplacian penalty in \oursexplicit
    acts as a data-dependent Tikhonov regularizer, yielding smaller estimator
    variance than vanilla BC.
    \item[(ii)] \emph{Smoothness guarantee:} Minimizers of
    $\mathcal{L}_{\mathrm{\oursexplicit}}$ satisfy a uniform bound on the local Lipschitz
    constant of $f$, whereas vanilla BC admits interpolants with arbitrarily
    large Lipschitz constants between training states.
    \item[(iii)] \emph{Policy stability:} In a closed-loop rollout, the deviation recursion
    \[
    \Delta_{t+1} \;\le\; L_s \Delta_t + L_a \|f(s_t)-f^*(s_t^*)\| \footnote{$L_s$ and $L_a$ are the Lipschitz constants of the environment transition dynamics with respect to state and action, respectively.}
    \]
    accumulates error linearly for vanilla BC, but sublinearly for \oursexplicit,
    since the smoothness regularizer enforces
    $\|f(s^*)-f(s'^{*})\| = O(\|s^*-s'^{*}\|)$ for neighbors $s^*,s'^*$.
\end{enumerate}

This suggests that \oursexplicit enjoys strictly better generalization and stability guarantees than BC.
\end{restatable}

\begin{proof}[Proof sketch]
We defer the detailed proof to Appendix~\ref{app:proof_mril}, but provide a brief sketch. The key idea of the proof is to first show that the smoothness regularizer directly corresponds to a graph Laplacian penalty on a graph constructed by a $k$-nearest neighbor ($k$-NN) affinity matrix defined by the kernel $w_i$. Next, we show that as the number of samples tends to infinity, this graph Laplacian penalty converges to the weighted Dirichlet energy~\citep{belkin2008towards, zhou2003learning}. Minimizing this Dirichlet energy (1) ensures that the learned $f$ is locally Lipschitz almost everywhere, ensuring smoothness and, in turn, policy stability, and (2) corresponds to Tikhonov regularization, thereby reducing estimator variance, while keeping the bias controlled. 
\end{proof}
\vspace{-12pt}

Intuitively, the smoothness regularizer is not merely penalizing pairwise disagreements between neighbors, but is driving the learned policy to be smooth with respect to the underlying data manifold. In particular, it shrinks the local Lipschitz constant of $f$ along directions where the data density $p(s)$ is high, ensuring that small changes
in state lead to small, consistent changes in the predicted action. As a result, the policy generalizes more reliably on in-distribution (ID) states and extrapolates in a structured manner on new out-of-distribution (OOD) states in the neighborhood.

\vspace{-2pt}

\subsection{Implicit Manifold Regularization via In-Context Architectures}
\label{sec:implicitreg}

While our \oursexplicit objective does amortize local learning to provide improvements over vanilla BC, there are two notable drawbacks. First, it requires a hyperparameter $\lambda$ that must be tuned to balance supervised accuracy and smoothness. Second, the requirement to optimize a modified, regularized objective rather than a standard BC objective may modify the optimization landscape in adverse ways. This raises a natural question: \emph{can we obtain the same benefits conferred by \oursexplicit (Eq.~\ref{eq:ic_objective}), by modifying the policy architecture rather than modifying the objective?}

In this section, we introduce a retrieval-based change in policy architecture that leads to an \emph{implicit} manifold regularization effect (\oursimplicit), despite using a standard imitation objective. With \oursimplicit, we can obtain the benefits of Laplacian smoothing (from \oursexplicit) by training on a standard BC objective (as shown in Fig.~\ref{fig:imril_smoothness}), without introducing $\lambda$ as an additional hyperparameter for training. We then build on this algorithm to develop a practical instantiation of this method (\ours) in Section~\ref{sec:practical}. 

\begin{wrapfigure}{r}{0.5\linewidth}
\vspace{-26pt}
    \centering
    \includegraphics[width=0.85\linewidth]{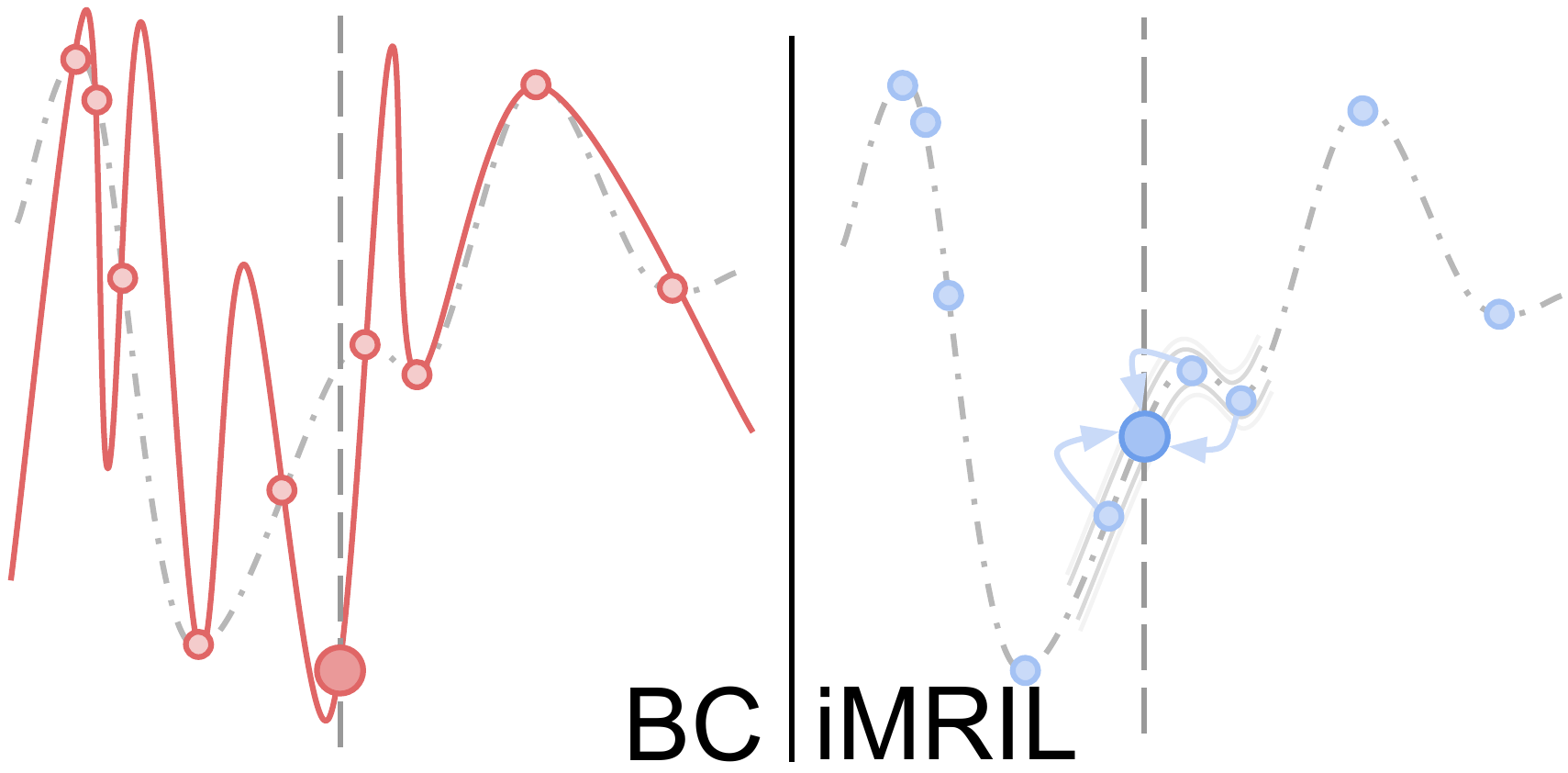}
    \caption{\footnotesize{\textbf{\oursimplicit implicitly achieves Laplacian smoothing, which reduces variance and enforces local consistency}, whereas the lack of smoothness constraint on standard BC allows for arbitrarily jagged function approximations.}}
    \label{fig:imril_smoothness}
\vspace{-18pt}
\end{wrapfigure}

\paragraph{\oursimplicit architecture:} The high-level idea behind \oursimplicit is simple -- we propose moving the neighborhood aggregation (averaging) operation from the objective (as in Eq.~\ref{eq:ic_objective}) to the architecture itself. So instead of learning a standard feedforward predictor $f(s)$ that is trained against a neighborhood regularized smoothness objective (Eq.~\ref{eq:ic_objective}), we propose embedding the structure of neighborhood aggregation directly into the parameterization of the action predictor $\hat{f}$ itself, while maintaining the objective as standard imitation learning. \oursimplicit learns the parameters of a per-state predictor $f_\theta$ such that an action predictor explicitly parameterized via neighborhood-aggregation $\hat{f}(s^*) = \frac{1}{k} \sum_{i \in \mathcal{N}_k(s^*)} f_\theta(s_i^*)$ across nearest neighbor states from the training set $\{s_i^*\}_{i \in \mathcal{N}_k(s^*)}$ generates accurate predictions of the corresponding expert action $a^*$. With this parameterization, \oursimplicit optimizes at training time:

\vspace{-5pt}

\begin{equation}
    \label{eq:darp}
    \arg \min_{\theta}
    \mathbb{E}_{(s^*, a^*) \sim \mathcal{D}^*}
    \biggl[
      \biggl\lVert
        \underbrace{\Bigg(\frac{1}{k} \sum_{i \in \mathcal{N}_k(s^*)} f_\theta(s_i^*)\Bigg)}_{\hat{f}(s^*)} - a^*
      \biggr\rVert_2
    \biggr]
\end{equation}
\vspace{-5pt}

At deployment time, inference can be performed on a new state $s_q$ simply by retrieving the $k$-NN of $s_q$ from the training set and performing neighborhood aggregation $\hat{a} = \hat{f}(s_q) = \frac{1}{k} \sum_{i \in \mathcal{N}_k(s_q)} f_\theta(s_i^*)$. As we show in Section~\ref{sec:practical}, the particular parameterization of $f$ is of crucial importance and plays a significant role in the empirical performance of \oursimplicit --- leading to the development of \ours.  

Intuitively, we are parameterizing the action predictor $\hat{f}$ as an aggregation of predictions at neighbor states from the training data $f(s_i^*)$, and then learning $f$. Supervising the post-aggregation function implicitly prevents any $f$ predictions from being arbitrarily non-smooth, conferring the benefits noted in Section~\ref{sec:explicitreg}. We prove a direct equivalence of \oursimplicit to the Laplacian regularization in Section~\ref{sec:explicitreg}.

\paragraph{Equivalence between \oursimplicit and \oursexplicit:} 
While we defer a full proof of formal equivalence between \oursexplicit and \oursimplicit to the Appendix Section~\ref{app:proofs}, we state our main result and a proof sketch to this effect here. 

\begin{restatable}[\oursimplicit is parameter-free Laplacian regularization for BC (\oursexplicit)]{retheorem}{restateimriltheorem}
\label{prop:imril_main_result}
Consider the symmetric normalized $k$-NN graph Laplacian $L$ (defined in Section~\ref{sec:explicitreg}), with eigenpairs
$\{(\mu_j,u_j)\}_{j=1}^n$, where $0 = \mu_1 \leq \mu_2 \leq \cdots \leq
\mu_n \leq 2$. 

The minimizers of the explicit \oursexplicit objective (Section~\ref{sec:explicitreg}) and the implicit \oursimplicit objective (Section~\ref{sec:implicitreg}) have the following closed form expansions

\[
\begin{aligned}
f_{\mathrm{\oursexplicit}} & = \sum_{j=1}^n \frac{1}{1+\lambda \mu_j}\,
    \langle a^*,u_j\rangle\, u_j
\qquad\qquad&
\hat f_{\mathrm{\oursimplicit}} & = \sum_{j=1}^n (1-\mu_j)\,\langle f,u_j\rangle\, u_j
\end{aligned}
\]

% \[
% f_{\mathrm{\oursexplicit}} \;=\; \sum_{j=1}^n \frac{1}{1+\lambda \mu_j}\,
%     \langle a^*,u_j\rangle\, u_j.
% \]

% The minimizer of the implicit \oursimplicit objective has the expansion 

% \[
% \hat f_{\mathrm{\oursimplicit}} \;=\; \sum_{j=1}^n (1-\mu_j)\, \langle f,u_j\rangle\, u_j,
% \] 

\oursimplicit’s neighbor aggregation step applies the fixed spectral filter
$\phi_{\mathrm{\oursimplicit}}(\mu)=1-\mu$ to the graph Laplacian $L$, preserving low-frequency modes and suppressing high-frequency modes. The congruence between $\hat f_{\mathrm{\oursimplicit}}$ and $f_{\mathrm{\oursexplicit}}$ shows that \oursimplicit is equivalent to a built-in form of
Laplacian smoothing (\oursexplicit) with effective $\lambda \approx 1$ in normalized units. Unlike explicit regularization, this implicit filter requires no additional
hyperparameter tuning.
\end{restatable}

\begin{proof}[Proof sketch]
We defer full details to Appendix Section~\ref{app:proof_imril}. The explicit regularizer admits a
spectral solution by diagonalizing the $k$-NN Laplacian, yielding a filter of
the form $(1+\lambda\mu)^{-1}$ on each eigenmode. The implicit objective can
be expressed as neighbor aggregation $\hat f = Sf$ with $S=D^{-1}A$, the
random-walk matrix, which has the same eigenvectors and applies the fixed
filter $1-\mu$. Intuitively, both act as low-pass filters on the graph: modes
with small eigenvalues (smooth variation across the data manifold) are largely
preserved, while modes with large eigenvalues (rapid, high-variance
fluctuations between neighbors) are strongly damped. Thus \oursimplicit implicitly
performs Laplacian smoothing, reducing variance and enforcing local
consistency without needing to tune~$\lambda$.
\end{proof}

Note that the implicit Laplacian smoothing view does not replace the need to learn a policy; rather, it constrains the class of functions that can be represented after aggregation. The neighbor-conditioned network $f_\theta$ learns how expert actions vary under local perturbations, proposing locally adapted actions for each neighbor. The aggregation operator then enforces variance reduction by smoothing these proposals across the neighborhood. In this way, learning provides accuracy by correcting local bias, while aggregation provides stability by controlling variance. 

% This separation of roles is central to the performance of \oursimplicit and in turn \ours.

\subsection{Difference-Aware Retrieval Policies: A Practical Instantiation of \oursimplicit for Imitation Learning}
\label{sec:practical}

Given the conceptual framework of \oursimplicit, we instantiate a practical algorithm for large-scale imitation learning. We build on the objective outlined in Eq.~\ref{eq:darp} and instantiate a careful choice of (1) parameterization, (2) neighbor aggregation that leads to strong empirical performance. 

\subsubsection{Difference-based parameterization of $f_\theta$}
The objective described in Eq.~\ref{eq:darp} leaves the parameterization and input representations of $f_\theta$ open to broad interpretation. We make the observation that the neighborhood aggregation should learn how expert actions vary under local perturbations. This suggests that $f_\theta$ should use knowledge of \emph{differences} between a query state and a neighbor state to adaptively propose locally adapted actions for each neighbor. In \textbf{D}ifference-\textbf{A}ware \textbf{R}etrieval \textbf{P}olicies (\textbf{\ours}), instead of simply parameterizing $f_\theta$ by  $f_\theta(s_i^*)$, we provide additional context about the optimal neighbor action $a_i^*$, as well as the \emph{difference} between the query state and the neighbor state $\Delta s_i = s_i^* - s_q$; a predictor $f_\theta$  predicts an action candidate $a^\prime_i$ for a query state $s_q$ and a neighbor $(s_i^*, a_i^*)$ using the difference information as $a^\prime_i = f_\theta(s_i^*, a_i^*, \Delta s_i = s_i^* - s_q)$.

Let $\mathcal{N}_k(s_q)$ be the index set of the $k$-nearest neighbors retrieved according to some distance function $d(s_q, s_i^*)$.\footnote{In our work we use the Euclidean distance in a pre-trained embedding space, although other neighborhood functions are also applicable. We refer the reader to Appendix Section~\ref{app:retrieval} for a thorough discussion of design decisions in constructing neighborhood sets via retrieval.} For generating predictions with \ours, we can then perform neighborhood aggregation (as outlined in Section~\ref{sec:implicitreg}) to predict an action for any query state $s_q$

\vspace{-15pt}

\begin{align}
    \label{eq:darp_arch}
    \hat{a}_q &= f_{\mathrm{\ours}}(s_q) = \frac{1}{k} \sum_{i \in \mathcal{N}_k(s_q)} a^\prime_i \\
    &= \frac{1}{k} \sum_{i \in \mathcal{N}_k(s_q)} f_\theta(s_i^*, a_i^*, \Delta s_i = s_i^* - s_q).
\end{align}
\vspace{-10pt}

At training time, this can be used to define a straightforward imitation learning objective from the expert dataset $\mathcal{D}^*$:
\vspace{-15pt}

\begin{equation}
    \label{eq:darp_obj}
   \arg \min_{\theta} \mathbb{E}_{(s_q^*, a_q^*) \sim \mathcal{D}^*} \left[\left\| f_{\ours}(s^*_q) - a_q^* \right\|^2\right]
\end{equation}

where we optimize for the parameters of the predictor $f_\theta$, minimizing the discrepancy between the predicted action $\hat{a}_q$ and optimal action $a_q^*$. Given the simplicity of the objective, any parameterization can be used for $f_\theta$, in our case, standard feedforward or convolutional neural networks. As we show in Section~\ref{sec:experiments}, this difference-based parameterization is crucial for performance. At inference time, we generate actions to execute by retrieving $k$-NN and performing inference through the neighborhood aggregation operation defined in Eq.~\ref{eq:darp_arch}. 

\subsubsection{Going Beyond Linear Aggregation}
While the process of neighborhood aggregation thus far has been restricted to averaging over neighborhood predictions $\hat{a}_q = \frac{1}{k} \sum_{i \in \mathcal{N}_k(s_q)} a^\prime_i$, this is a special case of a broader class of permutation-invariant aggregation functions $g_\psi(\{a^\prime_i\}_{i \in \mathcal{N}_k(s_q)})$. For instance, $g_\psi$ could be parameterized with more expressive set-compliant neural models like the set transformer \citep{lee2019set} or DeepSets \citep{deepsets}. This suggests a generalization of the prediction model in Eq.~\ref{eq:darp_arch} as $\hat{a}_q = g_\psi(\{f_\theta(s_i^*, a_i^*, \Delta s_i = s_i^* - s_q)\}_{i \in \mathcal{N}_k(s_q)}).$ Besides benefits in expressivity, generalizing from a simple averaging operation to a parametric aggregation model $g_\psi$ allows for the representation of richer action distributions (e.g Gaussian mixture models~\citep{pignat2019bayesian} or diffusion models~\citep{chi2024diffusionpolicy}) than the Gaussian distribution that is implicit to the $L_2$-regression objective defined in Eq.~\ref{eq:darp_obj}. Rather than predicting $\hat{a}_q$ directly, \ours can predict the parameters $\alpha$ of an action distribution $p(a_q; \alpha)$ -- for instance the means, covariances, and weights for a Gaussian mixture model, or the score function for a diffusion model. This allows \ours to perform maximum likelihood training of multimodal action distributions rather than just unimodal $L_2$-regression:

\vspace{-15pt}
\begin{equation}
    \label{eq:darp_pi_arch_mm}
\arg \max_{\theta} \mathbb{E}_{(s_q^*, a_q^*) \sim \mathcal{D}^*}
    \bigl[ \log p(a_q^*; \alpha_\theta(s_q^*)) \bigr], ~\text{where}~ 
\alpha_\theta(s_q^*)
    = g_\psi\!\left(\{f_\theta(s_i^*, a_i^*, \Delta s_i = s_i^* - s_q^*)\}_{i \in \mathcal{N}_k(s_q^*)}\right)
\end{equation}

Inference for a query state $s_q$ can be performed by sampling $\hat{a}_q \sim p(\cdot\,; \alpha_\theta(s_q))$, constructing $\alpha_\theta(s_q) = g_\psi\!\left(\{f_\theta(s_i^*, a_i^*, \Delta s_i = s_i^* - s_q)\}_{i \in \mathcal{N}_k(s_q)}\right)$ from a set of neighbors retrieved at test time. We refer readers to Appendix Section~\ref{app:pseudocode} for detailed training pseudocode.

\subsection{Summary}
\label{sec:summary}
We show that encouraging local consistency when fitting a behavior cloning policy corresponds to manifold regularization (Laplacian smoothing) and provably improves variance, smoothness, and stability over standard BC (Theorem~\ref{prop:ours_vs_bc}). Rather than adding an explicit regularization term (and its attendant hyperparameter $\lambda$) into our loss function, we show that we can induce smoothing implicitly by building neighborhood aggregation into the policy architecture itself (Theorem~\ref{prop:imril_main_result}). Thus, we propose the following algorithm: at training time, for each expert state-action pair $(s_q^*, a_q^*)$, \ours retrieves the $k$-nearest neighbor states from $\mathcal{D}^*$, computes difference vectors $\Delta s_i = s_i^* - s_q^*$ from each neighbor to the query state, passes each neighbor tuple $(s_i^*, a_i^*, \Delta s_i)$ through a network $f_\theta$ to produce candidate actions, and finally aggregates these candidates via a permutation-invariant function $g_\psi$ to predict the final action. This is trained only with the standard imitation learning objective.
\section{Experimental Evaluation}
\label{sec:experiments}
Next, we evaluate \ours in order to answer three key questions: \textbf{Q1:} Can \ours consistently outperform standard behavior cloning?, \textbf{Q2:} Can \ours handle more complex state representation and action distributions?, \textbf{Q3:} How do different architectural components contribute to \ours's performance gains? We conduct experiments across multiple domains using low-dimensional state representations, high-dimensional image features, and diverse action representations. Our evaluation includes continuous control tasks (MuJoCo), robotic manipulation (Robosuite), and specially designed discontinuous environments that stress-test the neighbor-based approach.
\vspace{-5pt}

\subsection{Baseline Comparisons and Task Descriptions}

\textbf{MuJoCo Tasks:} The MuJoCo \citep{todorov2012mujoco, fu2020d4rl} tasks entail controlling various legged figures in multiple embodiments to achieve forward locomotion on a flat plane. These tasks include: Hopper (single-legged hopping robot), Walker (bipedal humanoid), Ant (quadruped), and HalfCheetah (biped). 
% The challenge in these tasks is maximizing forward velocity while maintaining stability.

\textbf{Robosuite Tasks:} The Robosuite \citep{robosuite2020} tasks all entail a single robotic arm manipulating objects. In the Stack task, the goal is to put a smaller cube on top of a larger one. In the Threading task, the goal is to manipulate a thin, needle-like tool and insert it into a small ring. In the Square Peg task, the goal is to manipulate a square wooden block with a hole in the center and place it onto a square peg.

\textbf{RoboCasa Tasks:} The RoboCasa \citep{robocasa2024} tasks all entail a single robotic arm manipulating objects in a randomized kitchen setting. In the Drawer task, the goal is to close an open drawer. In the Door task, the goal is to close an open microwave oven door. In the Stove task, the goal is to twist a knob to turn off a stove burner. 

\textbf{Baseline Comparisons:} We compare \ours against a variety of baselines and ablations: (1) \textbf{R\&P~\citep{sridhar2025regent}:} refers to directly taking the action corresponding to the nearest neighbor, (2) \textbf{LWR ~\citep{pari2022surprising}:} refers to performing locally weighted regression on retrieved neighbors, (3) \textbf{BC:} refers to standard parametric behavior cloning, (4) \textbf{REGENT~\citep{sridhar2025regent}:} refers to a transformer-based in-context learning method conditioned on retrieved neighbors, (5) \textbf{MRIL:} refers to the explicitly smoothed version of \ours outlined in Section~\ref{sec:explicitreg}. 

\subsection{Can \ours consistently outperform standard behavior cloning? (Q1)}

In this experiment, we evaluate \ours's core hypothesis on tasks with low-dimensional state representations, where the distance metrics between states are well-defined and interpretable. This evaluation spans locomotion tasks from MuJoCo and robotic manipulation tasks from Robosuite and RoboCasa with data generated with MimicGen \citep{mandlekar2023mimicgen}. In these experiments, the aggregation function $g$ is implemented as a simple average of all neighbor action predictions $a^\prime$.
\begin{table}[H]
    \centering
    \resizebox{\textwidth}{!}{
    \begin{tabular}{rcccc}
        \toprule
        Method & Hopper & Ant & Walker & HalfCheetah \\
        \midrule
        R\&P \citep{sridhar2025regent} & 711.82 ± 85.63 & -305.97 ± 76.42 & 419.18 ± 50.21 & -178.64 ± 29.75 \\
        LWR \citep{pari2022surprising}& 1703.78 ± 245.95 & 846.59 ± 216.06 & 1484.91 ± 356.54 & 1945.82 ± 567.26 \\
        BC & 2313.65 ± 203.75 & 2376.20 ± 339.43 & 2658.40 ± 274.08 & 1063.23 ± 371.08 \\
        REGENT \citep{sridhar2025regent} & 1819.39 ± 186.24 & -302.10 ± 146.67 & 507.01 ± 76.10 & 169.85 ± 63.10 \\
        \oursexplicit & 2793.63 ± 156.41 & 3869.08 ± 241.00 & 4370.96 ± 168.13 & 701.58 ± 195.08 \\
        \textbf{\ours} & \textbf{3545.57 ± 3.54} & \textbf{4383.28 ± 266.37} & \textbf{4894.01 ± 75.12} & \textbf{5515.41 ± 841.33} \\
        \textbf{\ours Set Transformer} & 2965.86 ± 103.08 & \textbf{4063.79 ± 218.80} & \textbf{4752.42 ± 109.23} & 3417.85 ± 764.57 \\
        \bottomrule
    \end{tabular}}
    \caption{\footnotesize{\textbf{Both \ours and \ours Set Transformer outperform other approaches across all domains.} Performance Comparison of \ours vs. BC and other baselines across MuJoCo Environments Using Low-Dimensional State. Scores reported are averaged across 100 independent trials with 95\% confidence intervals. The parametric policies utilized Multi-Layer Perceptrons, for results with diffusion policies, see Section~\ref{tab:diffusion_ablation}.}}
    \label{tab:dan_vs_bc_mujoco}
\end{table}
\vspace{-10pt}
% \begin{table}[h!]
%     \centering
%     % Resizes the table to fit text width, matching your target style
%     \resizebox{\textwidth}{!}{%
%     \begin{tabular}{lccc}
%         \toprule
%         \textbf{Method} & \textbf{Stack} & \textbf{Threading} & \textbf{Peg Insertion} \\
%         & (Success \%) & (Success \%) & (Success \%) \\
%         \midrule
%         BC (MLP) & 34 & 12 & 28 \\
%         DARP (MLP) & 62 & 38 & 38 \\
%         \midrule
%         Diffusion & 52 & 44 & 54 \\
%         \textbf{DARP w/ Diffusion} & \textbf{82} & \textbf{72} & \textbf{66} \\
%         \bottomrule
%     \end{tabular}%
%     }
%     \caption{\footnotesize{\textbf{\ours in combination with diffusion policy.} DARP provides significant improvements when applied to both standard MLP policies and Diffusion policies. Results are on the Robosuite Stack, Threading, and Peg Insertion tasks as success rates across 50 trials.}}
%     \label{tab:diffusion_ablation}
% \end{table}

We find that \ours demonstrates substantial improvements over standard behavior cloning across all tested environments. We observe performance gains ranging from 15-25\% points in robotic manipulation tasks and significant score improvements in locomotion tasks (see Table~\ref{tab:dan_vs_bc_mujoco} and Table~\ref{tab:robo_results}). We observe that purely non-parametric methods (R\&P and LWR) perform poorly on these tasks, and while MRIL is nearly always able to get a score higher than vanilla BC, the highest scores on this suite of tasks are always achieved by our \ours architecture.

Given the changes introduced for the practical instantiation in Section~\ref{sec:practical}, we evaluate whether \ours scales up to higher-dimensional input representations such as images.

\vspace{300pt}

% ==========================================
% Row 1: All Figures (Images)
% ==========================================
\begin{figure}[htbp]
    \centering
    \begin{minipage}[t]{0.32\linewidth}
        \centering
        \includegraphics[width=0.85\linewidth, height=0.85\linewidth]{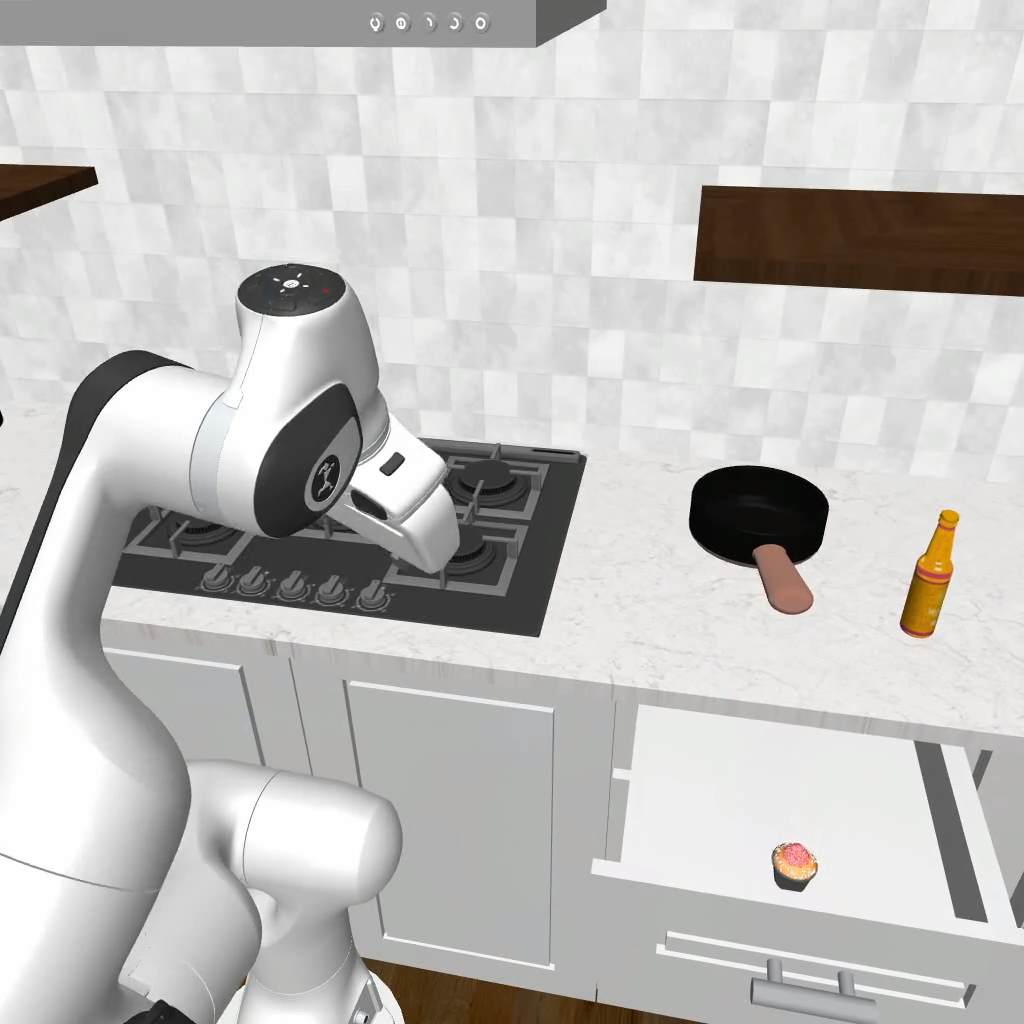}
        \captionof{figure}{The ``DrawerClose'' RoboCasa task in which the robotic arm is tasked with closing the open drawer.}
        \label{fig:robocasa_task}
    \end{minipage}\hfill
    \begin{minipage}[t]{0.32\linewidth}
        \centering
        \includegraphics[width=0.85\linewidth, height=0.85\linewidth]{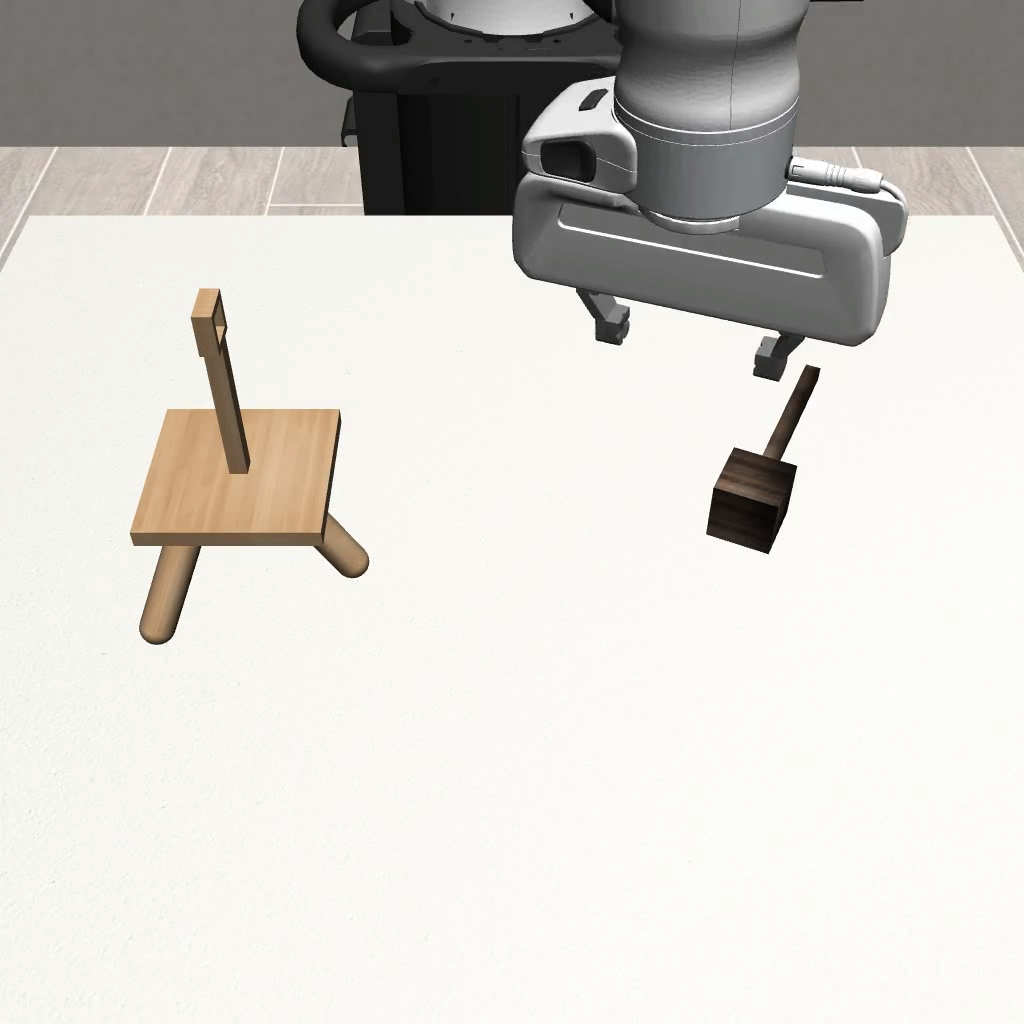}
        \captionof{figure}{The ``Threading'' Robosuite task in which the robotic arm is tasked with inserting the needle implement into the small hole.}
        \label{fig:robosuite_task}
    \end{minipage}\hfill
    \begin{minipage}[t]{0.32\linewidth}
        \centering
        \includegraphics[width=0.85\linewidth]{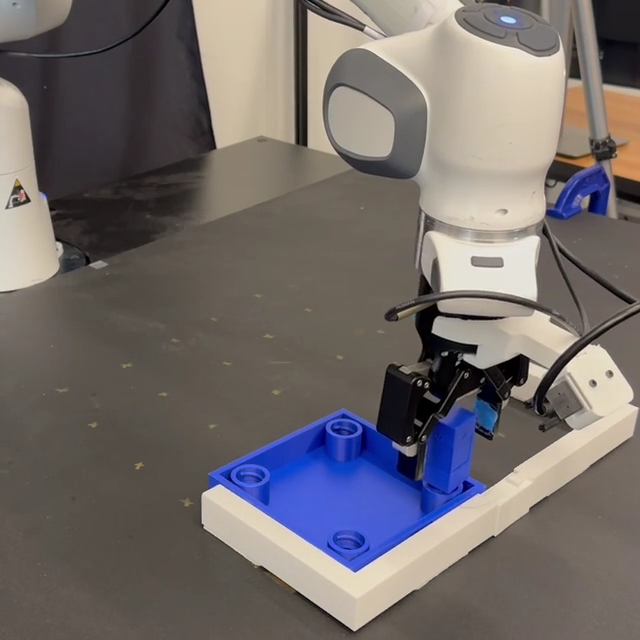}
        \captionof{figure}{Real-world FurnitureBench square table assembly task. The robot is tasked with picking up a table leg and screwing it into a hole in the corner of the tabletop.}
        \label{fig:furniturebench_task}
    \end{minipage}
\end{figure}

% ==========================================
% Row 2: All Tables
% ==========================================
\begin{table}[htbp]
    \centering
    \begin{tabular}{r ccc ccc c}
        \toprule
        & \multicolumn{3}{c}{RoboCasa} & \multicolumn{3}{c}{Robosuite} & Real \\
        \cmidrule(lr){2-4} \cmidrule(lr){5-7} \cmidrule(lr){8-8}
        Method & Drawer & Door & Stove & Stack & Thrd. & Peg & Sq.\ Table \\
        \midrule
        BC
            & 54 & 29 & 28
            & 47 & 37 & 46
            & 44 \\
        \textbf{\ours}
            & \textbf{85} & \textbf{45} & \textbf{43}
            & \textbf{72} & \textbf{63} & \textbf{62}
            & \textbf{92} \\
        \bottomrule
    \end{tabular}
    \caption{\footnotesize Comparing \ours against BC across all three evaluation environments
    using low-dimensional state features. Scores are success percentages (RoboCasa \&
    Robosuite: 100 trials; Real: 50 trials). \ours consistently outperforms
    BC across all tasks and environments, more than doubling the score of BC in the real world. RoboCasa and Robosuite policies utilized Multi-Layer Perceptrons, while the real-world results used diffusion policy, see Section~\ref{tab:diffusion_ablation}.}
    \label{tab:robo_results}
\end{table}

\begin{wraptable}{r}{0.42\linewidth}
  \centering
  \setlength{\tabcolsep}{3pt}
  \renewcommand{\arraystretch}{1.05}
          \begin{tabular}{rccc}
            \toprule
            Method & Stack & Thrd. & Peg \\
            \midrule
            BC & 44 & 38 & 17 \\
            \textbf{\ours} & \textbf{75} & \textbf{76} & \textbf{52} \\
            \bottomrule
        \end{tabular}
        \caption{\footnotesize Success rates (\%) on vision-based Robosuite tasks.}
  \label{tab:robosuite}
  
    \vspace{-10pt}
\end{wraptable}

% \begin{wraptable}{r}{4.0cm}
%     \centering
%     \vspace{-50pt}
%     \setlength{\tabcolsep}{3pt}
%     \begin{tabular}{rlll}
%         \toprule
%         Method & Stack & Thrd. & Peg \\
%         \midrule
%         BC & 44 & 38 & 17 \\
%         \textbf{\ours} & \textbf{75} & \textbf{76} & \textbf{52} \\
%         \bottomrule
%     \end{tabular}
%     \caption{\footnotesize{Comparing across vision-based Robosuite environments. Scores are success rates out of 100 trials. \ours significantly outperforms BC, scaling to visual inputs.}}
%     \label{tab:robosuite}
%     \vspace{-15pt}
% \end{wraptable}

\subsection{Can \ours handle more complex state representation and action distributions? (Q2)}

\textbf{High-Dimensional Visual Input Representations}. To test the applicability of \ours beyond the regime of compact, low-dimensional states, we evaluate \ours on simulated robotic manipulation tasks using R3M image embeddings \citep{nair2022r3m}. This tests whether the neighbor-based approach remains effective when states are represented as high-dimensional feature vectors extracted from visual observations (see Table~\ref{tab:robosuite}). Observe that, not only does \ours outperform standard BC, the average improvement, $\sim 35\%$, is actually higher than the average improvement on Robosuite tasks in low-dimensional state ($\sim 22\%$). Empirically, this means that \ours was better at adapting to complex, high-dimensional state representations than standard BC.

\textbf{Multi-modal Action Distributions}. We show that \ours can solve complex multimodal imitation learning tasks such as the Push-T environment over $20\%$ better than behavior cloning. We defer details to Appendix~\ref{app:multi_modal}. 

\subsection{How do different architectural components contribute to \ours's performance gains? (Q3)}

\paragraph{Ablation Study:} To understand which components of the \ours architecture contribute most to its performance gains, we conduct a comprehensive ablation study examining each design choice, namely (1) standard \ours; (2) \ours, but without including the neighbor actions; (3) an ensemble of 10 BC agents; (4) \ours, but we choose random neighbors as opposed to using a distance metric; (5) \ours, but we take the $L_2$ norm of the distance vector; (6) BC baseline, which is just the query state $s_q$; (7) \ours, but include just the query state rather than the distance vector between the query state and neighbor states; (8) \ours, but using a permutation-dependent (so $\textbf{not}$ permutation-invariant) aggregator to combine all $a^\prime$s. We report in Figure~\ref{fig:dan_bc_results} the results of this systematic ablation. 

\begin{figure}
    \centering
    
\includegraphics[width=1\textwidth]{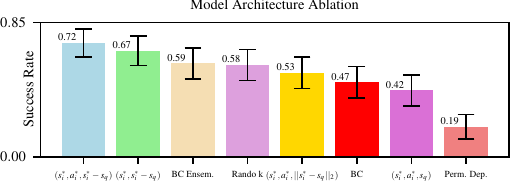}
    \caption{\footnotesize{\textbf{Distance vectors and permutation invariance contribute heavily to \ours's success.} Exploration of how the performance of a \ours agent is impacted as various changes are made to the core architecture demonstrates that \ours success is most attributed to the distance vectors $(s^{*}_{i}, a^{*}_{i}, s^{*}_{i}-s_{q})$. Success rate is averaged across 100 trials on the Robosuite Stack environment with 95\% confidence intervals.}}
    \label{fig:dan_bc_results}
    \vspace{-12pt}
\end{figure}

The ablation study reveals that distance vectors and permutation invariance are crucial for \ours's success, while neighbor actions have a more modest impact. Random neighbor selection performs poorly, confirming that meaningful distance metrics informing neighbor selection are crucial. The permutation-invariant aggregation function $g$ proves critical, as permutation-dependent alternatives significantly degrade performance.

% Interestingly, \ours outperforms an ensemble of BC agents -- a major architectural difference between the two is that \ours is ensembled and then supervised, while each agent in the BC ensemble is trained separately, and then ensembled at inference-time. Evidently, the former mixed with our difference-based reparameterization outperforms the latter in this scenario. 

%\subsection{Multi-Modal Task (Q1)}
% \vspace{20pt}

\begin{figure}[h]
    \centering
    \includegraphics[width=0.9\textwidth]{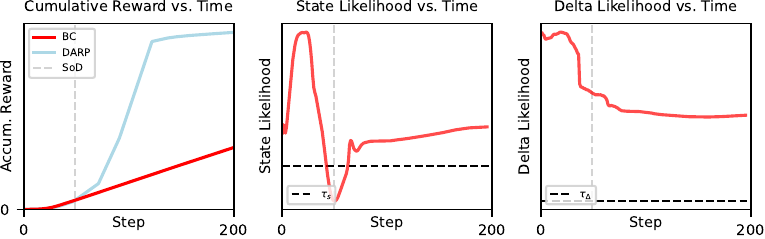}\hfill
    \caption{\footnotesize{\textbf{Cumulative rewards for BC and \ours on the Robosuite stack task illustrate initially identical rollouts that diverge as BC fails the task and \ours succeeds.} A vertical dashed line indicates the step in which the two diverge, labeled ``SoD''. At the SoD, the state likelihood is $< \tau_s$ (OOD), but the delta likelihood is $> \tau_\Delta$ (in distribution).}}
    \label{fig:drift}
    \vspace{-12pt}
\end{figure}
\paragraph{Divergence Analysis:}

To better understand \ours's success over standard BC, we analyze the point of divergence in rollouts in which the latter fails but the former succeeds. We identify the ``step of divergence'' as the point at which \ours and BC begin to receive a significantly different reward.
% To quantify whether states and deltas are in-distribution, we fit two Gaussian Mixture Models --- one to the distribution of states encountered at training, and the second to the distribution of deltas.
We define $\tau_s$ and $\tau_\Delta$ as the 1st percentile of likelihoods of the training set (that is, 1\% of the deltas seen at training time are less likely than $\tau_\Delta$). In all six different rollouts across two different tasks (the Robosuite Stack task and the MuJoCo Hopper task), the query state at the SoD has a state likelihood of $< \tau_s$ but a delta likelihood of $> \tau_\Delta$. This result bolsters our hypothesis that \ours gains occur partly due to improved prediction on slightly out-of-distribution states due to reparameterization in terms of difference vectors to neighbors. (see Figure~\ref{fig:drift} for plots of reward drift, SoDs, and state and delta likelihood for one task.) See Appendix~\ref{app:forking_experiments} for additional experiments regarding \ours robustness.

\section{Related Work}
\label{related-work}

\textbf{Non-Parametric Imitation Learning Methods:} Non-parametric IL algorithms demonstrate surprising performance by leveraging local structure. VINN \cite{pari2022surprising} explores locally weighted regression for imitation, showing surprising results in image embedding spaces, and MiDiGaP \cite{von2025unreasonable} uses mixtures of Gaussian processes to model multimodal trajectories and achieve rapid generalization. SEABO \cite{lyu2024seaboasimple} uses retrieval methods to perform offline RL by rewarding transitions close to neighbors to form a reward function. FlowRetrieval \cite{lin2024flowretrieval}, STRAP \cite{memmel2025strap}, and Behavior Retrieval \cite{du2023behavior} perform non-parametric retrieval and finetuning from large unlabeled datasets, enabling generalization through test-time training. \ours differs from the above in its unique parameterization of retrieved states into $(s_i^*, a_i^*, s_i^* - s_q)$ tuples and learning a semi-parametric policy rather than relying purely on non-parametric aggregation or test-time training. This provides the variance reduction of local methods and generalization of parametric policies. 

\textbf{Smoothness-Constrained Policy Learning:} 
\label{related:smooth}
Much recent literature has explored explicit smoothness constraints to improve policy stability and robustness. L2C2 \cite{kobayashi2022l2c2} considers model-free RL under local Lipschitz continuity constraints, achieving smoothness and noise robustness without sacrificing expressiveness, while \citet{asadi2018lipschitz} proposed a similar methodology for model-based RL models with Lipschitz constraints. CCIL \cite{ke2024ccil} extends these ideas to generate synthetic corrective labels for imitation learning using a Lipschitz-constrained dynamics model. This has also been scaled up to humanoid controllers~\cite{chen2024lcp} to reduce shakiness on deployment. \ours differs from these methods by enforcing smoothness implicitly through an architecture change while using standard imitation learning objectives.

\textbf{In-Context Learning Methods:} Recent work has explored non-parametric retrieval from the perspective of in-context imitation learning. REGENT \cite{sridhar2025regent} investigates retrieval-augmented generalization by incorporating retrieved states, actions, and rewards into a causal transformer, while DPT \cite{lee2023supervised} uses supervised pretraining for transformers to predict actions given query states and in-context datasets, effectively learning how to explore. Other in-context architectures include ICRT \cite{fu2024icrt}, Instant Policy \cite{vosylius2024instant}, and KAT \cite{dipalo2024kat}. These methods aim to quickly adapt to new tasks and environments, whereas \ours focuses on accomplishing higher performance and stability on standard imitation learning.
\section{Conclusion}
\label{conclusion}

We introduced \textbf{D}ifference-\textbf{A}ware \textbf{R}etrieval \textbf{P}olicies for Imitation Learning (\textbf{DARP}) (\ours), a nearest-neighbor-based algorithm that reparameterizes the imitation learning problem in terms of relative differences between query states and their nearest neighbors, rather than learning direct state-to-action mappings. We prove that our method implicitly achieves Laplacian smoothing. Our experimental evaluation across diverse domains, including continuous control and robotic manipulation, validates three key hypotheses. First, \ours consistently outperforms standard behavior cloning when using low-dimensional state representation. Second, \ours maintains performance across different state representations, action distribution modeling requirements, and task complexities, with improvements ranging from 15-46\% across tested scenarios. Third, architectural ablations reveal that distance vectors and permutation-invariant aggregation are crucial components to our algorithm.

% Below doesn't count towards page count
\section{Reproducibility Statement}
\label{reproducibility-statement}

A link to supplementary source code is provided. This codebase contains all code used to train and evaluate our models. It also contains policy and environment configuration files to generate all results seen in this paper. We provide all data used in MuJoCo experiments and provide scripts to generate expert demonstrations for Robosuite tasks via MimicGen. We also provide all code necessary to transform between different modalities, such as low-dimensional state representation to images to R3M features. Results will be identical to those in the paper on NVIDIA L40 and L40s GPUs, with the exception of results that require the use of a transformer (REGENT, Set Transformer), which are non-deterministic and may differ slightly from reported numbers.
\section{Acknowledgments}

This work was funded by the Toyota Research Institute under the University 2.0 program, along with grants from the National Science Foundation NRI (\#2132848), DARPA RACER (\#HR0011-21-C-0171), and the Office of Naval Research (\#N00014-24-S-B001 and \#2022-016-01 UW). We gratefully acknowledge gifts from Amazon, Collaborative Robotics, Cruise, the Research Scholarship from the Mary Gates Endowment for Students, and others.

\bibliography{main}
\bibliographystyle{iclr2026_conference}

% \bibliographystyle{IEEEtran}
% \bibliography{main}

\newpage
\appendix
\section{Appendix}
\label{appendix}

\subsection{Lemmas and Proofs}
\label{app:proofs}

\subsubsection{Proof of Theorem~\ref{prop:ours_vs_bc}}
\label{app:proof_mril}
To prove Theorem~\ref{prop:ours_vs_bc}, we first start with a well-known result~\citep{chung1997spectral,zhou2003learning,belkin2008towards}.
\begin{lemma}[Smoothness regularizer as $k$-NN graph Laplacian penalty]
\label{lem:ic_laplacian}
Let $\{s_1^*,\dots,s_n^*\}$ be the expert states with corresponding predicted
actions $f(s_i^*) \in \mathbb{R}^{d_a}$. For each $i$, let
$\mathcal{N}_k(s_i^*)$ denote the indices of the $k$-nearest neighbors of
$s_i^*$ (excluding $i$). Define asymmetric weights
\[
\tilde W_{ij} \;=\;
\begin{cases}
w_j(s_i^*), & \text{if } j \in \mathcal{N}_k(s_i^*), \\
0, & \text{otherwise},
\end{cases}
\]
and construct a symmetric affinity matrix
\[
W_{ij} \;=\; \tfrac{1}{2}\big(\tilde W_{ij} + \tilde W_{ji}\big).
\]
Let $D$ be the degree matrix with $D_{ii} = \sum_j W_{ij}$, and define
the $k$-NN graph Laplacian $L = D - W$.

Then the smoothness regularizer can be written as the quadratic form
\[
\mathcal{L}_{\mathrm{S}}(f)
= \frac{1}{n}\sum_{i=1}^n \sum_{j \in \mathcal{N}_k(s_i^*)} w_j(s_i^*)\,
    \|f(s_i^*) - f(s_j^*)\|^2
\;\propto\; \mathrm{Tr}\!\big(F^\top L F\big),
\]
where $F = [f(s_1^*), f(s_2^*), \dots, f(s_n^*)]^\top \in \mathbb{R}^{n \times d_a}$.
Equivalently, in the scalar case,
\[
\mathcal{L}_{\mathrm{S}}(f) \;\propto\; f^\top L f .
\]
\end{lemma}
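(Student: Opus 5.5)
The plan is to rewrite the double sum as a sum over all ordered pairs with the weight matrix $\tilde W$, symmetrize it, and then apply the standard Laplacian identity $\sum_{i,j} W_{ij}\|x_i-x_j\|^2 = 2\,\mathrm{Tr}(X^\top (D-W) X)$. First, because $\tilde W_{ij}=w_j(s_i^*)$ for $j\in\mathcal{N}_k(s_i^*)$ and $\tilde W_{ij}=0$ otherwise, the inner sum over neighbors can be taken over all $j$:
\[
\mathcal{L}_{\mathrm{S}}(f)=\frac1n\sum_{i=1}^n\sum_{j=1}^n \tilde W_{ij}\,\|F_i-F_j\|^2 ,
\]
where $F_i=f(s_i^*)^\top$ is the $i$th row of $F$.

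Second, the summand $\|F_i-F_j\|^2$ is symmetric in $(i,j)$. Relabeling $i\leftrightarrow j$ gives $\sum_{i,j}\tilde W_{ij}\|F_i-F_j\|^2=\sum_{i,j}\tilde W_{ji}\|F_i-F_j\|^2$. Averaging these two expressions shows that $\tilde W$ can be replaced by $W=\tfrac12(\tilde W+\tilde W^\top)$ without changing the sum. This step is the only genuine subtlety, since the $k$-NN relation is not symmetric; symmetrization costs nothing only because the pairwise penalty is itself symmetric.

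Third, expand $\|F_i-F_j\|^2=\|F_i\|^2+\|F_j\|^2-2\langle F_i,F_j\rangle$ and use the symmetry of $W$ and $D_{ii}=\sum_j W_{ij}$:
\[
\sum_{i,j}W_{ij}\|F_i-F_j\|^2 = 2\sum_i D_{ii}\|F_i\|^2-2\sum_{i,j}W_{ij}\langle F_i,F_j\rangle
= 2\,\mathrm{Tr}\!\big(F^\top D F\big)-2\,\mathrm{Tr}\!\big(F^\top W F\big)=2\,\mathrm{Tr}\!\big(F^\top L F\big).
\]
It follows that $\mathcal{L}_{\mathrm{S}}(f)=\tfrac{2}{n}\,\mathrm{Tr}(F^\top LF)$, with proportionality constant $2/n$. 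This constant is independent of $f$ because the weights $w_j(s_i^*)$ depend only on the states and not on $f$.

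The scalar case $d_a=1$ is the same computation with $F=f\in\mathbb{R}^n$, which gives $f^\top Lf$. Alternatively, the trace form splits column by column as $\sum_{c=1}^{d_a}F_{:,c}^\top L F_{:,c}$. As a by-product, $L$ is symmetric positive semidefinite because the quadratic form equals a nonnegative weighted sum of squares. This is the property used later in the spectral arguments for Theorem~\ref{prop:imril_main_result}. No step is a real obstacle; the symmetrization in the second step is the only point that needs care.
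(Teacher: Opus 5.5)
Your proof is correct and is the standard derivation. The paper gives no separate proof of this lemma; it cites it as a well-known result, and its proof of Theorem~\ref{prop:ours_vs_bc}(i) uses exactly your identity $\sum_{i,j}W_{ij}\|f(s_i^*)-f(s_j^*)\|^2 = 2f^\top Lf$, so your constant $2/n$ agrees with it. Your explicit symmetrization step, which replaces $\tilde W$ by $W$ because the pairwise penalty is symmetric in $(i,j)$, is the one detail the paper leaves implicit, and you handle it correctly.
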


\begin{corollary}[Continuum limit of smoothness regularizer]
\label{cor:dirichlet_limit}
Assume states $\{s_i^*\}_{i=1}^n$ are sampled i.i.d.~from a smooth density
$p(s^*)$ supported on an $m$-dimensional $C^2$ manifold
$\mathcal{M} \subset \mathbb{R}^d$.
Let $W$ be the symmetrized $k$-NN affinity matrix constructed from a
kernel $K_\Delta$ with bandwidth $h$, and let $L=D-W$ be the graph Laplacian.

If $n \to \infty$, $h \to 0$, and $n h^{m+2} \to \infty$, then the
normalized quadratic form converges to the weighted Dirichlet energy:
\[
\frac{1}{n^2 h^{m+2}}\, \mathrm{Tr}\!\big(F^\top L F\big)
\;\longrightarrow\;
C_K \int_{\mathcal{M}}
    \|\nabla_{\!\mathcal{M}} f(s)\|_2^2\, p(s)^2 \, d\mathrm{vol}(s),
\]
where $C_K > 0$ is a constant depending only on the kernel $K_\Delta$.

\end{corollary}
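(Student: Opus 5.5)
The plan is to reduce Corollary~\ref{cor:dirichlet_limit} to the classical pointwise and integrated convergence results for graph Laplacians \citep{belkin2008towards}, using Lemma~\ref{lem:ic_laplacian} to rewrite the trace as a double sum. By Lemma~\ref{lem:ic_laplacian}, in the scalar case (the vector case follows coordinatewise, since $\mathrm{Tr}(F^\top L F)=\sum_{c=1}^{d_a} F_{:,c}^\top L F_{:,c}$), we have
\[
f^\top L f = \tfrac12\sum_{i,j} W_{ij}\,\big(f(s_i^*)-f(s_j^*)\big)^2 .
\]
We first replace the $k$-NN weights by an $\varepsilon$-ball kernel. We will work in the regime where $k$ grows so that the $k$-NN radius is of order $h$, i.e.\ $k \asymp n h^m p(s)$ locally. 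We then take $W_{ij}$ to be asymptotically $K_\Delta(\|s_i^*-s_j^*\|/h)$, with the normalization of $w_i$ absorbed into the constant. The density-dependent normalization of the $k$-NN weights would change the power of $p$; we treat that discrepancy as a bounded multiplicative correction and fold it into $C_K$.

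\textbf{Step 1 (mean).} Compute the expectation of the U-statistic
\[
\frac{1}{n^2 h^{m+2}}\sum_{i\neq j} K_\Delta\!\Big(\tfrac{\|s_i-s_j\|}{h}\Big)\big(f(s_i)-f(s_j)\big)^2 .
\]
Up to the $(n-1)/n$ factor, it equals
\[
h^{-(m+2)}\iint K_\Delta\!\Big(\tfrac{\|x-y\|}{h}\Big)\big(f(x)-f(y)\big)^2 p(x)p(y)\,dx\,dy .
\]
Work in normal coordinates $y=\exp_x(hv)$ on $\mathcal{M}$ and Taylor expand. Since $f\in C^2$, we get $f(y)-f(x)=h\langle\nabla_{\mathcal M}f(x),v\rangle+O(h^2)$. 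The density and the volume form contribute $p(y)=p(x)+O(h)$ and $dy=h^m(1+O(h^2))\,dv$, and $\|x-y\|=h\|v\|+O(h^3)$ by the $C^2$ curvature bound. The leading term is then
\[
\int_{\mathcal M} p(x)^2\int_{\mathbb R^m} K_\Delta(\|v\|)\,\langle\nabla f,v\rangle^2\,dv\,dx ,
\]
and by isotropy the inner integral equals $C_K\|\nabla_{\mathcal M} f\|^2$ with
\[
C_K=\tfrac1m\int K_\Delta(\|v\|)\|v\|^2\,dv .
\]
The remainder is $O(h)$. Compactness and the $C^2$ assumptions give uniform bounds, which justify the dominated convergence.

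\textbf{Step 2 (concentration).} Show that the variance of the normalized U-statistic vanishes. By the Hoeffding decomposition, the variance is at most of order $1/n$ from the linear part plus $1/(n^2 h^m)$ from the degenerate part, both relative to the squared mean. Each summand is bounded by $O(h^2)\cdot\|K\|_\infty$ after normalization. Under $nh^{m+2}\to\infty$ (in particular $nh^m\to\infty$), Chebyshev gives convergence in probability. For almost sure convergence one would use Bernstein's inequality for U-statistics with Borel--Cantelli.

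\textbf{Main obstacle.} The hard step is rigorously controlling the passage from $k$-NN weights to a fixed-bandwidth kernel. The $k$-NN radius is random and depends on $p$, and the symmetrization $\tfrac12(\tilde W+\tilde W^\top)$ mixes two different radii. I would first try to prove uniform consistency of the $k$-NN radius, $r_k(s)/(k/(n p(s)))^{1/m}\to c$ uniformly on compact $\mathcal M$. Given that, the $k$-NN graph is sandwiched between two $\varepsilon$-graphs with bandwidths $h(1\pm\delta)$, and Steps 1--2 apply to each. This also shows that the precise power of $p$ in the limit depends on how $k$ and $h$ are coupled. The stated $p^2$ form holds for the fixed-bandwidth coupling, which is the reading we adopt.
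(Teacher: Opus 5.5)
\textbf{There is a genuine gap: your argument proves the fixed-bandwidth version of the statement, not the $k$-NN version the statement refers to.} The paper gives no proof of this corollary. It is simply asserted after Lemma~\ref{lem:ic_laplacian} as a consequence of the cited graph-Laplacian convergence literature. Your Steps 1 and 2 are a sound sketch of that classical result; the normal-coordinate expansion and the Hoeffding bound $O(1/n+1/(n^2h^m))$ are fine. But they concern the unnormalized, fixed-bandwidth graph $W_{ij}=K_\Delta(\|s_i^*-s_j^*\|/h)$ over all pairs. The gap is the reduction from the $k$-NN affinity of Lemma~\ref{lem:ic_laplacian} to that graph: each of the three moves you use to bridge it fails, and the discrepancies they hide are not constants.

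\textbf{Normalization.} The $w_j(s_i^*)$ are normalized, so every row of $\tilde W$ sums to one. Hence $\mathrm{Tr}(F^\top LF)=\tfrac12\sum_i\sum_j\tilde W_{ij}\|f(s_i^*)-f(s_j^*)\|^2$ is $n$ times a weighted average of squared local differences, of order at most $nh^2$. The normalizer you ``absorb into the constant'' is $Z_i=\sum_{l\in\mathcal N_k(s_i^*)}K_\Delta(\|s_i^*-s_l^*\|/h)$. In your regime it is of order $k\asymp nh^m$ and in general depends on $p(s_i^*)$. Under the prescribed $1/(n^2h^{m+2})$ scaling the left-hand side is therefore $O(1/(nh^m))\to0$. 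Under the nondegenerate $1/(nh^2)$ scaling, dividing by $Z_i$ changes the limiting density weight away from $p^2$.

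\textbf{Truncation.} Even with unnormalized weights, the truncation is active in your regime $r_k(s)\asymp h$. Your own Step 1 computation then gives a multiple of $\int_{\mathcal M}\|\nabla_{\mathcal M}f\|^2p^2\,\Phi(\bar r_k(s)/h)\,d\mathrm{vol}$. Here $\Phi(\rho)=\tfrac1m\int_{\|v\|\le\rho}K_\Delta(\|v\|)\|v\|^2\,dv$ is the truncated version of your $C_K$, and $\bar r_k(s)\propto(k/(np(s)))^{1/m}$. This factor is a nonconstant function of $p(s)$ unless $p$ is uniform, so it cannot be folded into a kernel-only constant. Relatedly, ``$k\asymp nh^mp(s)$ locally'' cannot hold for every $s$ with a single $k$.

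\textbf{Sandwich.} Uniform consistency of $r_k$ sandwiches the $k$-NN graph between graphs with \emph{location-dependent} radii $(1\pm\delta)\bar r_k(s)$. It does not give two $\varepsilon$-graphs of common radius $h(1\pm\delta)$, so Steps 1--2 do not apply to the two sides as you claim.

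\textbf{What survives.} Your argument transfers to $k$-NN-truncated unnormalized weights only when truncation is asymptotically inactive ($\min_s r_k(s)/h\to\infty$). In that case $W$ coincides asymptotically with the full kernel matrix and no sandwich is needed. For the construction the paper actually specifies (normalized Gaussian weights, $h$ equal to the median $k$-NN distance), both the scaling and the $p^2$ weight in the statement are off. As your closing remark half-concedes, the right conclusion is that the statement must be restricted, not that the mismatch can be absorbed into $C_K$.
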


%Intuitively, this convergence means that the smoothness regularizer is not merely penalizing pairwise disagreements between neighbors, but is driving the learned policy to be smooth with respect to the underlying data manifold. In particular, it shrinks the local Lipschitz constant of $f$ along directions where the data density $p(s)$ is high, ensuring that small changes in state lead to small, consistent changes in the predicted action. As a result, the policy generalizes more reliably on ID states and extrapolates in a structured manner on IC states.

\restatemriltheorem*
\begin{proof}
We prove each claim in turn.

\paragraph{(i) Variance reduction.}
The Laplacian penalty in $\mathcal{L}_{\mathrm{\oursexplicit}}$ is
\[
\sum_{i,j} W_{ij}\,\|f(s_i^*)-f(s_j^*)\|^2 \;=\; 2 f^\top L f ,
\]
where $L=D-W$ is the graph Laplacian, and $W$ is the $k$-NN affinity matrix. By Lemma~\ref{lem:ic_laplacian},
this equals the empirical Dirichlet energy of $f$ on the $k$-NN graph.
It is well known \citep{zhou2003learning,belkin2008towards} that such a
quadratic penalty is equivalent to Tikhonov regularization with respect to
the graph Laplacian norm $\|f\|_L^2 = f^\top L f$. In statistical learning
theory, adding a Tikhonov penalty strictly reduces the variance of the
estimator compared to the unregularized solution while keeping the bias term
controlled. Thus \oursexplicit enjoys smaller estimator variance than vanilla BC,
which uses no such penalty.

\paragraph{(ii) Smoothness guarantee.}
Consider the continuum limit (Corollary~\ref{cor:dirichlet_limit}):
for i.i.d.\ samples $\{s_i^*\}$ from density $p$ on a smooth manifold
$\mathcal{M}$, the normalized penalty converges to
\[
\int_{\mathcal{M}} \|\nabla f(s)\|^2\, p(s)^2 \, d\mathrm{vol}(s).
\]
This is the weighted Dirichlet energy of $f$ on $\mathcal{M}$.
If this integral is finite, $f$ belongs to the Sobolev space
$H^1(\mathcal{M},p^2)$, and in particular $f$ is locally Lipschitz almost
everywhere with
\[
\|f(s^*)-f(s'^*)\| \;\le\; C \|s^*-s'^*\| \quad \text{for $p$-a.e. neighbor pairs $s^*,s'^*$}.
\]
Therefore minimizers of $\mathcal{L}_{\mathrm{MRIL}}$ have uniformly bounded
local Lipschitz constants along high-density regions of the state space.
By contrast, minimizers of vanilla BC have no such constraint: any
oscillatory interpolant that matches the training data exactly yields the
same supervised risk, so arbitrarily large Lipschitz constants are possible.

\paragraph{(iii) Policy stability.}
Let $\Delta_t = \|s_t - s_t^*\|$ denote the deviation at time $t$.
For Lipschitz dynamics $T$,
\[
\Delta_{t+1} \;\le\; L_s \Delta_t + L_a \|f(s_t)-f^*(s_t^*)\|.
\]
Decompose the action error:
\[
\|f(s_t)-f^*(s_t^*)\|
\;\le\; \|f(s_t)-f^*(s_t)\| + \|f^*(s_t)-f^*(s_t^*)\|.
\]

For vanilla BC, the first term $\|f(s_t)-f^*(s_t)\|$ is only minimized on
the empirical distribution $P_\mathcal{S}$; off-distribution, it may be
$O(1)$ regardless of $\Delta_t$. The second term satisfies
$\|f^*(s_t)-f^*(s_t^*)\| = O(\Delta_t)$ by smoothness of $f^*$.
Hence the recursion can take the form
\[
\Delta_{t+1} \;\le\; L_s \Delta_t + L_a\big(O(1) + O(\Delta_t)\big),
\]
which accumulates linearly in $t$.

For \oursexplicit, the Laplacian penalty enforces
\[
\|f(s^*)-f(s'^*)\| \;\le\; C \|s^*-s'^*\| \quad \text{for neighbor pairs $(s^*,s'^*)$},
\]
as shown in part (ii). Thus
$\|f(s_t)-f^*(s_t)\| = O(h^2)$ (where $h$ is the radius of the kernel bandwidth around the point $s_t$) by local-linear regression error bounds,
and
$\|f^*(s_t)-f^*(s_t^*)\| = O(\Delta_t)$ by smoothness of $f^*$.
Combining these,
\[
\Delta_{t+1} \;\le\; L_s \Delta_t + L_a \big(O(h^2) + O(\Delta_t)\big).
\]
Since the constant multiplying $\Delta_t$ is strictly smaller under the
smoothness constraint, the cumulative error grows strictly slower than in
vanilla BC. In particular, error growth is sublinear in the rollout horizon
when $h$ is small, whereas it is linear for vanilla BC.

\paragraph{Conclusion.}
Claims (i)--(iii) establish that \oursexplicit yields lower variance, uniform
smoothness control, and sublinear rollout error accumulation compared to
vanilla BC, completing the proof.
\end{proof}

\noindent\textbf{Kernel choice.}
For the IC smoothness regularizer, we adopt a Gaussian kernel
\[
w_i(s^*) \;\propto\; \exp\!\Big(-\tfrac{\|s^* - s_i^*\|^2}{2h^2}\Big),
\qquad
\sum_{i \in \mathcal{N}_k(s^*)} w_i(s^*)=1,
\]
with bandwidth $h$ set to the median distance to the $k$-th nearest neighbor
across the dataset. This choice is standard in manifold regularization
\citep{belkin2008towards,zhou2003learning} and ensures that the graph
Laplacian penalty converges to the Dirichlet energy in the continuum limit.
In practice, we found this default to be stable across tasks, though other
kernels (e.g., uniform $k$-NN or exponential decay) yield qualitatively
similar results.

\subsubsection{Proof of Theorem~\ref{prop:imril_main_result}}
\label{app:proof_imril}

We begin with the required Lemmas establishing the spectral form of explicit Laplacian regularization and neighbor aggregation.

\begin{lemma}[Spectral form of explicit Laplacian regularization]
\label{lem:explicit_spectral}
Let $L$ be the symmetric normalized graph Laplacian with eigenpairs
$\{(\mu_j,u_j)\}_{j=1}^n$, where $0 = \mu_1 \leq \mu_2 \leq \cdots \leq
\mu_n \leq 2$. The minimizer of the penalized objective
\[
\mathcal{L}_\lambda(f) = \|f-a^*\|^2 + \lambda f^\top L f
\]
has the closed-form expansion
\[
f_\lambda \;=\; \sum_{j=1}^n \frac{1}{1+\lambda \mu_j}\,
    \langle a^*,u_j\rangle\, u_j.
\]
Thus $\lambda$ directly determines the spectral filter
$\phi_\lambda(\mu) = (1+\lambda\mu)^{-1}$ applied to each Laplacian mode.
\end{lemma}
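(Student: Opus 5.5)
The proof is a direct quadratic minimization followed by diagonalization in the eigenbasis of $L$. We treat the scalar case, with $f, a^* \in \mathbb{R}^n$ denoting the vectors of values at the $n$ expert states. The vector-valued case then follows column by column, since both terms of $\mathcal{L}_\lambda$ decouple across action coordinates, using $\mathrm{Tr}(F^\top L F)=\sum_c F_{:,c}^\top L F_{:,c}$ from Lemma~\ref{lem:ic_laplacian}.

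First, we note that the symmetric normalized Laplacian $L = I - D^{-1/2} W D^{-1/2}$ is real symmetric and positive semidefinite, because $f^\top L f = \tfrac12\sum_{ij}W_{ij}(f_i/\sqrt{D_{ii}}-f_j/\sqrt{D_{jj}})^2 \ge 0$. By the spectral theorem it admits an orthonormal eigenbasis $\{u_j\}$ with real eigenvalues $\mu_j$. The bounds $0=\mu_1\le\cdots\le\mu_n\le 2$ are the standard ones and are taken as given in the statement. The lower bound comes from $D^{1/2}\mathbf{1}$ lying in the kernel; the upper bound comes from $f^\top(2I-L)f = \tfrac12\sum W_{ij}(f_i/\sqrt{D_{ii}}+f_j/\sqrt{D_{jj}})^2\ge 0$.

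Next, we minimize. The objective $\mathcal{L}_\lambda(f)=\|f-a^*\|^2+\lambda f^\top L f$ is a quadratic with Hessian $2(I+\lambda L)$. Since $L\succeq 0$ and $\lambda>0$, this Hessian has all eigenvalues at least $2$, so it is positive definite and the minimizer is unique. Setting the gradient $2(f-a^*)+2\lambda Lf$ to zero gives $(I+\lambda L)f = a^*$, hence $f_\lambda=(I+\lambda L)^{-1}a^*$. We then expand $a^*=\sum_j\langle a^*,u_j\rangle u_j$. The matrix $I+\lambda L$ shares the eigenvectors $u_j$ with eigenvalues $1+\lambda\mu_j\ge1$, so its inverse acts on each mode by multiplication by $(1+\lambda\mu_j)^{-1}$. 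This yields the stated expansion and identifies the filter $\phi_\lambda(\mu)=(1+\lambda\mu)^{-1}$. An equivalent route is to substitute $f=\sum_j c_j u_j$; orthonormality then splits the objective into independent scalar problems $\min_{c_j}\,(c_j-\langle a^*,u_j\rangle)^2+\lambda\mu_j c_j^2$, each solved in one line.

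There is no real obstacle here. The only point needing care is consistency: the Laplacian in the penalty must be the same symmetric normalized $L$ whose eigenpairs appear in the statement. If the penalty were instead written with the unnormalized $D-W$ from Lemma~\ref{lem:ic_laplacian}, we would reparametrize by $g=D^{1/2}f$ or simply assume the normalized form, as the lemma does. Otherwise the norm $\|f-a^*\|^2$ would become $D$-weighted and the eigenbasis would no longer diagonalize both terms simultaneously. We will state this convention explicitly at the start of the proof.
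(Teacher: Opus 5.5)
Your proof is correct and matches the paper's. The coordinate-wise route you list as an alternative (write $f=\sum_j c_j u_j$, $a^*=\sum_j b_j u_j$ and minimize each decoupled scalar quadratic $(c_j-b_j)^2+\lambda\mu_j c_j^2$) is exactly the paper's argument, and your normal-equation derivation $f_\lambda=(I+\lambda L)^{-1}a^*$, with the positive-semidefiniteness and uniqueness checks, is just a more careful version of it; your closing worry about the unnormalized $D-W$ is unnecessary, since the same argument works with the eigenpairs of $D-W$ itself (losing only the bound $\mu_n\le 2$), whereas substituting $g=D^{1/2}f$ is what would introduce the weighted fidelity term you want to avoid.
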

\begin{proof}
Diagonalize $L=U\Lambda U^\top$ with $U=[u_1,\dots,u_n]$ orthogonal and
$\Lambda=\mathrm{diag}(\mu_1,\dots,\mu_n)$. Write $f=Uc$, $a^*=Ub$ in
this basis. The objective becomes
\[
\|Uc-Ub\|^2 + \lambda c^\top \Lambda c
= \sum_{j=1}^n (c_j-b_j)^2 + \lambda \mu_j c_j^2.
\]
Minimizing each term yields
$c_j = \frac{1}{1+\lambda \mu_j} b_j$.
Transforming back gives the stated expansion.
\end{proof}

\begin{lemma}[Spectral form of neighbor aggregation]
\label{lem:agg_spectral}
Let $S=D^{-1}A$ be the random-walk matrix of the $k$-NN graph, with adjacency
$A$ and degree $D$. For any prediction vector $f$, the neighbor-averaged
prediction is $\hat f = Sf$. In the Laplacian eigenbasis, this corresponds to
the spectral filter
\[
\hat f \;=\; \sum_{j=1}^n (1-\mu_j)\, \langle f,u_j\rangle\, u_j,
\]
i.e.\ $\phi_{\mathrm{\ours}}(\mu)=1-\mu$.
\end{lemma}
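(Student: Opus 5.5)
The plan is to reduce the random-walk aggregation operator $S=D^{-1}A$ to the symmetric normalized Laplacian $L=I-D^{-1/2}AD^{-1/2}$ from Lemma~\ref{lem:explicit_spectral}, using the fact that the two are related by a diagonal similarity. Then I will read off the spectral filter from the eigendecomposition $L=U\Lambda U^\top$, which is already available.

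\textbf{Step 1: the similarity identity.} First I verify directly that
\[
D^{1/2} S D^{-1/2} \;=\; D^{-1/2} A D^{-1/2} \;=\; I - L .
\]
It follows that $S = D^{-1/2}(I-L)D^{1/2}$.

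\textbf{Step 2: the eigenpairs of $S$.} Substituting $L=U\Lambda U^\top$ gives
\[
S = D^{-1/2}U(I-\Lambda)U^\top D^{1/2}.
\]
So $S$ has eigenvalues $1-\mu_j\in[-1,1]$ with right eigenvectors $D^{-1/2}u_j$. In particular, the aggregation shares the Laplacian's spectrum under the map $\mu\mapsto 1-\mu$.

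\textbf{Step 3: the filter in normalized coordinates.} Write $\tilde f = D^{1/2} f$, which are the degree-normalized coordinates in which $L$ is self-adjoint. Applying Step 2,
\[
D^{1/2}\hat f \;=\; (I-L)\tilde f \;=\; \sum_{j=1}^n (1-\mu_j)\,\langle \tilde f,u_j\rangle\,u_j .
\]
This is exactly the stated expansion with the filter $\phi(\mu)=1-\mu$. Equivalently, in the original coordinates,
\[
\hat f=\sum_j (1-\mu_j)\langle f, D^{-1/2}u_j\rangle_{D}\,D^{-1/2}u_j,
\]
where $\langle x,y\rangle_D = x^\top D y$. Since the vectors $D^{-1/2}u_j$ are $D$-orthonormal, this is an orthogonal spectral expansion in that inner product.

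\textbf{Step 4: when the literal formula holds.} I then note that the formula in the plain Euclidean inner product holds verbatim whenever $D$ is a multiple of the identity. This is the case for uniform-weight $k$-NN aggregation on a $k$-regular graph, which matches the $\frac1k\sum_{i\in\mathcal N_k}$ form of Eq.~\ref{eq:darp}. In general, the statement should be read in the $D^{1/2}$-normalized coordinates of Step 3.

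\textbf{Main obstacle.} The key difficulty is that $S$ is not symmetric, so the $u_j$ are not literally its eigenvectors and the expansion cannot hold with ordinary inner products in general. I will handle this with the similarity transform of Step 1 rather than attempting a direct diagonalization of $S$. I will state explicitly that the identity is exact in the degree-weighted inner product, and in the Euclidean one for regular graphs. Everything else is routine linear algebra mirroring the proof of Lemma~\ref{lem:explicit_spectral}.
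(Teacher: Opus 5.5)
Your argument uses the same idea as the paper's proof: conjugate $S$ by $D^{1/2}$ to reach $I-L$. You carry it out correctly where the paper does not. The paper writes $S=I-L_{\mathrm{rw}}$ with $L_{\mathrm{rw}}=D^{-1}L$; for the normalized $L$ the correct relation is $L_{\mathrm{rw}}=D^{-1/2}LD^{1/2}$. It then concludes, from the similarity of $L_{\mathrm{rw}}$ and $L$, that the $u_j$ themselves diagonalize $S$, i.e.\ $Su_j=(1-\mu_j)u_j$. A similarity transfers only the eigenvalues; the eigenvectors become $D^{-1/2}u_j$, exactly as in your Step~2.

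Consequently the Euclidean expansion as literally stated fails for non-regular graphs, which symmetrized $k$-NN graphs generically are. On the path $1$--$2$--$3$ with $f=e_1$, one gets $Sf=(0,\tfrac12,0)^\top$. By contrast, $\sum_j(1-\mu_j)\langle f,u_j\rangle u_j=(I-L)f=(0,\tfrac{1}{\sqrt2},0)^\top$. Your version in $D^{1/2}$-coordinates (equivalently, in the $D$-inner product) is the correct statement, and you are right not to attempt the formula verbatim.

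One refinement to Step~4: the verbatim identity needs $A$ symmetric and $D=cI$ at the same time. The operator actually used in Eq.~\ref{eq:darp} is $\frac1k A_{\mathrm{dir}}$, where $A_{\mathrm{dir}}$ is the directed $k$-NN adjacency. That matrix has constant out-degree but is generally not symmetric, so your Steps~1--3, which need an orthonormal eigenbasis of $L$, do not apply to it directly. The symmetrized graph of Lemma~\ref{lem:ic_laplacian} is symmetric but has nonconstant degrees. The literal formula therefore covers only the case where the $k$-NN relation is mutual, i.e.\ an undirected $k$-regular graph. You should state that explicitly rather than suggest it matches the $\frac1k\sum$ form of Eq.~\ref{eq:darp} in general.
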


\begin{proof}
By definition, $L = I - D^{-1/2} A D^{-1/2}$ and
$S = D^{-1}A = I - L_{\mathrm{rw}}$ where
$L_{\mathrm{rw}} = D^{-1}L$ is the random-walk Laplacian.
Since $L_{\mathrm{rw}}$ and $L$ share the same spectrum
up to similarity transform, the eigenbasis $\{u_j\}$ diagonalizes $S$.
Thus for each mode $u_j$, $Su_j = (1-\mu_j)u_j$, yielding the claimed
spectral filter.
\end{proof}

\restateimriltheorem*

\begin{proof}[Proof]
From Lemma~\ref{lem:agg_spectral}, the neighbor aggregation operator $S=D^{-1}A$
acts on Laplacian eigenmodes $u_j$ as
\[
Su_j = (1-\mu_j) u_j,
\]
where $\mu_j$ are the normalized Laplacian eigenvalues.
Thus in the graph Fourier basis, neighbor aggregation corresponds to
multiplying each mode by the fixed spectral filter
$\phi_{\mathrm{\ours}}(\mu) = 1-\mu$.

On the other hand, Lemma~\ref{lem:explicit_spectral} shows that explicit
Laplacian regularization with parameter $\lambda$ yields the spectral filter
$\phi_\lambda(\mu) = (1+\lambda \mu)^{-1}$.
Both filters downweight high-frequency modes ($\mu \gg 0$) while preserving
low-frequency modes ($\mu \approx 0$). The key difference is that
$\phi_\lambda(\mu)$ requires tuning $\lambda$, whereas $\phi_{\mathrm{\ours}}(\mu)$
is parameter-free.
\begin{figure}[b]
    \centering
    \begin{tikzpicture}
\begin{axis}[
    width=10cm,
    height=7cm,
    xlabel={Eigenvalue $\mu$},
    ylabel={Spectral filter $\phi(\mu)$},
    xmin=0, xmax=2,
    ymin=-0.5, ymax=1.1,
    grid=both,
    legend pos=north east,
    legend style={font=\small, fill=none, draw=none},
    title={Spectral Filters: Explicit Laplacian vs.\ \ours}
]

% DAN-BC filter
\addplot[thick,blue,domain=0:2,samples=200] {1-x};
\addlegendentry{$\phi_{\mathrm{\ours}}(\mu) = 1-\mu$}

% Explicit filters with different lambdas
\addplot[thick,red,dashed,domain=0:2,samples=200] {1/(1+0.5*x)};
\addlegendentry{$\phi_{\lambda=0.5}(\mu)$}

\addplot[thick,green,dashed,domain=0:2,samples=200] {1/(1+1.0*x)};
\addlegendentry{$\phi_{\lambda=1}(\mu)$}

\addplot[thick,orange,dashed,domain=0:2,samples=200] {1/(1+2.0*x)};
\addlegendentry{$\phi_{\lambda=2}(\mu)$}

\end{axis}
\end{tikzpicture}
    \caption{\ours achieves sharper low-pass
filtering}
    \label{fig:spectral_filters}
\end{figure}

To see the equivalence, note that for small $\mu$,
\[
\phi_{\mathrm{\ours}}(\mu) = 1-\mu \;\approx\; (1+\mu)^{-1} = \phi_{\lambda=1}(\mu)
\quad \text{up to $O(\mu^2)$ terms}.
\]
Thus \ours can be interpreted as performing Laplacian smoothing with an
effective regularization weight of order $\lambda\approx 1$ in normalized
units. Moreover, for large $\mu$, $\phi_{\mathrm{\ours}}(\mu)$ damps high-frequency
modes even more strongly by driving them toward zero, providing a sharper
low-pass effect than explicit regularization.

Therefore, \ours’s aggregation step is mathematically equivalent to
implicit Laplacian regularization with fixed spectral filter
$\phi_{\mathrm{\ours}}$, eliminating the need to tune $\lambda$ explicitly.
\end{proof}

\ours can therefore be viewed as a form of \emph{locally adaptive implicit
regularization}: rather than introducing an explicit global weight $\lambda$,
its neighbor aggregation step enforces smoothness automatically through the
graph structure. The effective regularization strength varies with local
degree and neighborhood geometry, adapting to the density of the expert
demonstrations. Spectrally, this corresponds to the fixed filter
$\phi_{\mathrm{\ours}}(\mu)=1-\mu$, which suppresses high-frequency modes more
aggressively than any fixed explicit $\lambda$. Figure~\ref{fig:spectral_filters}
illustrates this comparison, showing how \ours achieves sharper low-pass
filtering without the need for hyperparameter tuning.

\subsection{Additional Experimental Details}
\label{app:additionaldetails}
\subsubsection{Retrieval}
\label{app:retrieval}
\begin{figure}[h!]
    \centering
    \includegraphics[width=0.32\textwidth]{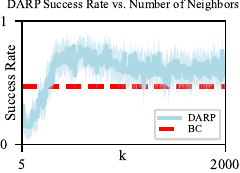}\hfill
    \includegraphics[width=0.32\textwidth]{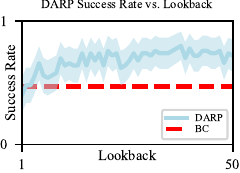}\hfill
    \includegraphics[width=0.32\textwidth]{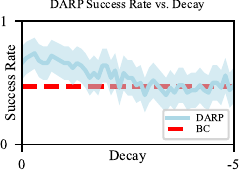}
    \caption{\footnotesize{\ours performance analysis as retrieval hyperparameters are swept: (left) observe that the performance of a \ours model is poor when using few neighbors, reaches a global optimum when retrieving about 500 neighbors, and plateaus just above BC's success rate as $k$ goes to the size of the dataset; (center) observe that the performance of a \ours model generally slightly improves as more history is considered, and only performs worse than BC when very little or no history is considered; (right) observe that the performance of a \ours model is sensitive to how much weight is applied to older observations when performing retrieval. Intuitively, if this decay is too high, \ours performance is nearly identical to having little to no lookback, performing worse than BC. Success rate is measured out of 50 trials on the Robosuite Stack environment. 95\% confidence intervals are included.}}
    \label{fig:retrieval_params_sweep}
\end{figure}

The selection of the distance function $d(s_q, s_i^*)$ to select $k$ neighbors is crucial. While we find that simple Euclidean distance between states can work, in our experiments, we use a slightly modified algorithm that takes advantage of the fact that we are working with sequences of states and incorporates history in our distance calculation.\\
Suppose we have a query trajectory $S_q = (\dots, s_{q, -1}, s_{q, 0})$ where $s_{q, 0}$ is the current query state $s_q$. Now suppose we want to calculate $d(s_q, s_i^*)$, where $s_i^*$ is some state from the expert dataset. We first find the trajectory this state is from—call this $S^*_j$—and the index of $s_i^*$ in this trajectory—call this $t$. Thus, $s_i^*$ can be rewritten as $s^*_{j,t}$. Given some lookback parameter $\ell$ which denotes how many past states we want to consider, we get:
\[
d(s_q, s_i^*) = \sum_{n=0}^{\ell-1} \lVert s_{q, -n} - s^*_{j, t-n}\rVert
\]
This is simply the accumulation of Euclidean distances of the current and last $\ell-1$ states from both the query trajectory and the source trajectory, assuming valid indices. Of course, in practice, we generally want to put more emphasis on more recent states, as we want them to be more influential in the selection of neighbors. Thus, given some rate of exponential decay $\gamma \ge 0$, we have
\[
d(s_q, s_i^*) = \sum_{n=0}^{\ell-1} \lVert s_{q, -n} - s^*_{j, t-n}\rVert \cdot e^{-\gamma n}
\]

See Figure~\ref{fig:retrieval_params_sweep} for an experimental analysis on how the success rate in an environment changes as these parameters are swept.

\subsubsection{Can \ours handle tasks requiring the representation of multi-modal action distributions?}
\label{app:multi_modal}

\begin{figure}[h!]
    \centering
    \begin{minipage}[c]{0.3\textwidth}
        \centering
        \includegraphics[width=\linewidth]{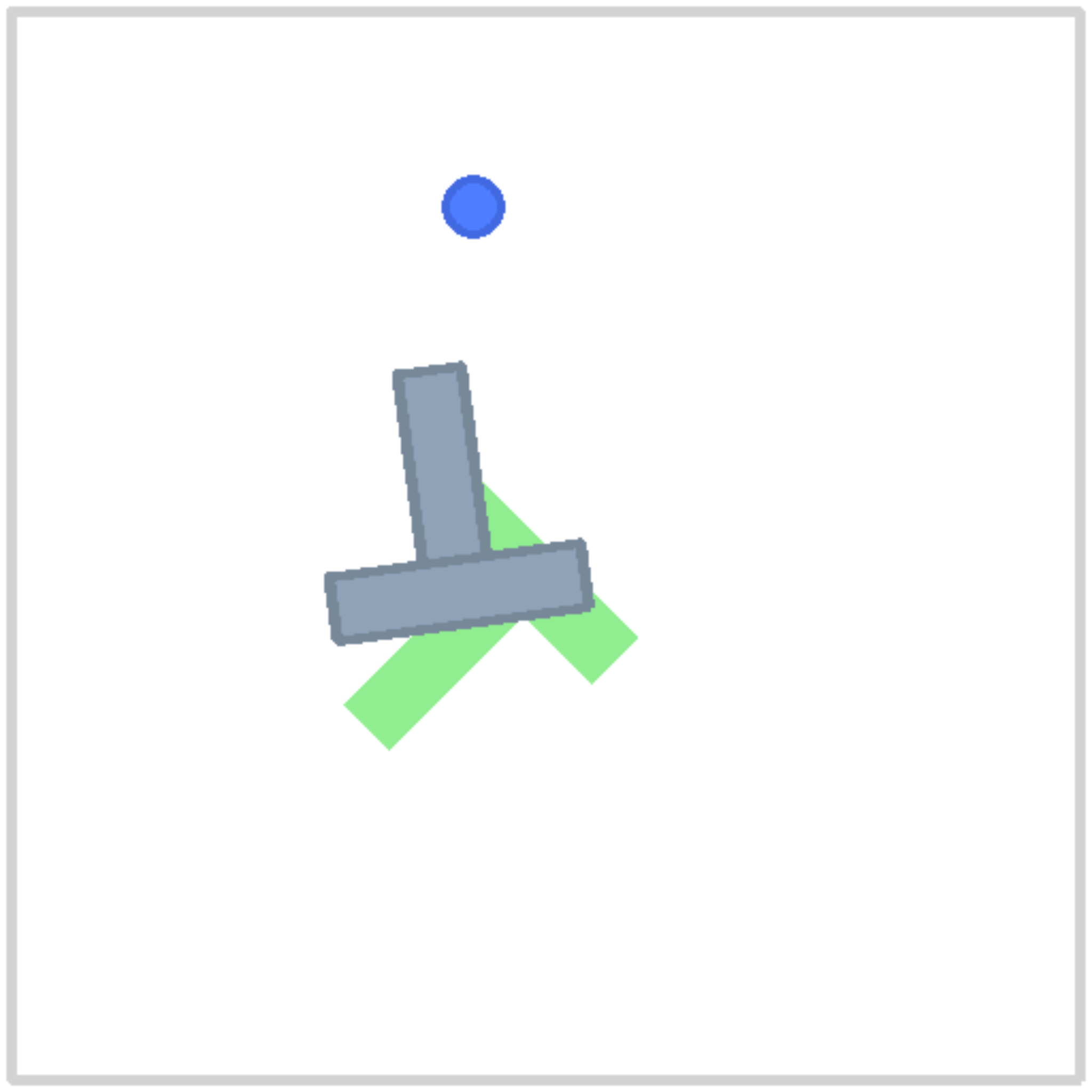}
        \caption{\footnotesize{\textbf{Push-T Environment.} The goal is to control the blue circle to push the T-shaped block.}}
        \label{fig:push_t_viz}
    \end{minipage}
    \hspace{2cm}
    \begin{minipage}[c]{0.3\textwidth}
        \centering
        \begin{tabular}{rl}
            \toprule
            Method & Score \\
            \midrule
            BC & 48 ± 8 \\
            \textbf{\ours} & \bfseries 70 ± 8 \\
            \bottomrule
        \end{tabular}
        \captionof{table}{\footnotesize{\textbf{Push-T Results.} Averaged over 100 trials, \ours outperforms BC.}}
        \label{fig:push_t_table}
    \end{minipage}
\end{figure}
We test \ours's ability to handle complex action distributions by evaluating on the Push-T task, as described in \citep{chi2024diffusionpolicy}, which requires representing multi-modal action distributions (see Figure~\ref{fig:push_t_viz} for a visualization). For this experiment, \ours employs a Set Transformer head that predicts parameters of a Gaussian Mixture Model. We note that \ours with a GMM head is able to handle multi-modal distributions effectively, showing a 22\% improvement over BC on the Push-T task (Q2) (see Table~\ref{fig:push_t_table}). This demonstrates that \ours can be further adapted to multi-modal action distribution modeling requirements.

\subsubsection{Can \ours handle discontinuous environments where nearby states may require opposing actions?}

A key concern for neighbor-based approaches is performance in environments with strong discontinuities, where states that are close in Euclidean distance may require drastically different actions. To address this concern, we design a stress test using a modified version of D4RL's Umaze environment (see Figure~\ref{fig:long_maze_viz} for a visualization).

\textbf{Even in this deliberately challenging discontinuous environment, \ours achieves a 57\% success rate compared to BC's 25\%. (Q3) (see Table~\ref{fig:long_maze_results})} This suggests that the distance vectors and permutation-invariant aggregation help the model distinguish between appropriate and inappropriate neighbors, even when spatial proximity doesn't guarantee action similarity. 

\begin{figure}[h!]
    \centering
    \begin{minipage}[c]{0.3\textwidth}
        \centering
        \includegraphics[width=0.9\linewidth]{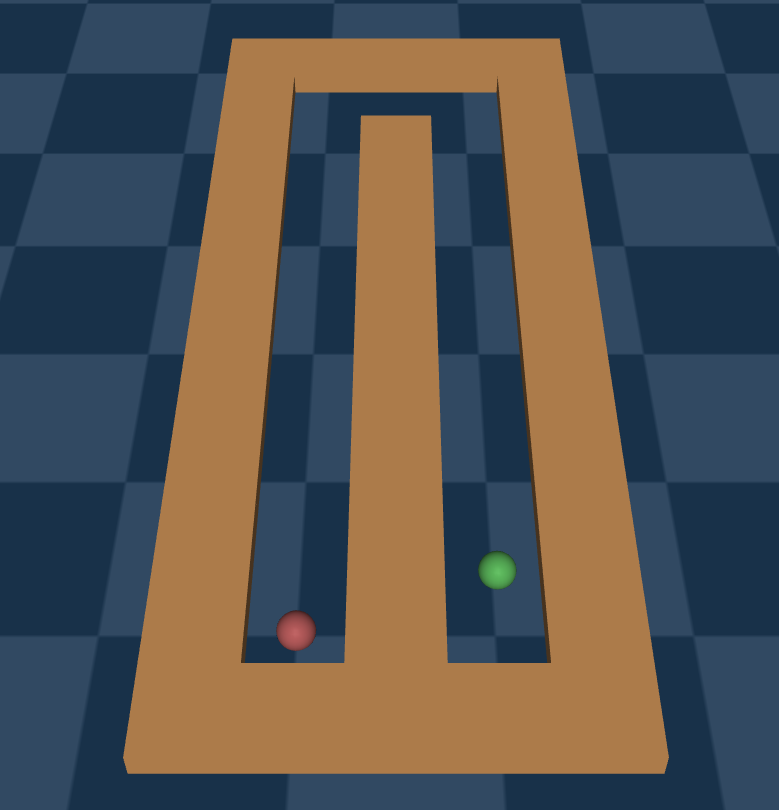}
        \caption{\textbf{Long maze environment.} The goal is to move a force-actuated ball from the green start to the red destination.}
        \label{fig:long_maze_viz}
    \end{minipage}
    \hspace{2cm}
    \begin{minipage}[c]{0.3\textwidth}
        \centering
        \begin{tabular}{rl}
            \toprule
            Method & Succ. (\%)\\
            \midrule
            BC & 25 \\
            \textbf{\ours} & \textbf{57} \\
            \bottomrule
        \end{tabular}
        \captionof{table}{\textbf{Long maze results.} Averaged over $100$ trials, \ours significantly outperforms BC.}
        \label{fig:long_maze_results}
    \end{minipage}
\end{figure}

\subsubsection{Can \ours Recover From BC Error?}
\label{app:forking_experiments}

\begin{figure}[h]
    \centering
    \includegraphics[width=0.50\textwidth]{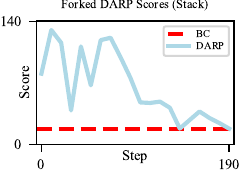}\hfill
    \includegraphics[width=0.50\textwidth]{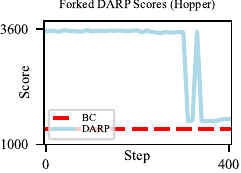}\hfill
    \caption{In two different tasks (the Robosuite Stack task and the MuJoCo Hopper task), we rollout a BC agent and create a fork of the environment every $k$ steps (in this case, $k=10$). Observe that, even as BC nears the end of its failing rollout, \ours is able to scale highly, and is only prevented from doing so about halfway through the Stack rollout and about 80\% through the Hopper rollout.}
    \label{fig:fork_plots}
\end{figure}

In order to analyze \ours's robustness to accumulated error, we roll out a BC agent in an environment in which we know it will fail, but every $k$ steps, we create a fork of the environment and begin rolling out a \ours agent in that clone of the environment. The results (seen in Fig.~\ref{fig:fork_plots}) show that, even as BC approaches failure and drifts away from the support of expert demonstrations, \ours is able to recover and score very highly. This suggests that \ours indeed has superior robustness to accumulation of error and can perform well in the slightly out-of-distribution states that a failing BC agent drifts into.

\subsubsection{Comparison with CCIL}
\begin{table}[h!]
    \centering
    \begin{tabular}{llc}
        \toprule
        \textbf{Environment} & \textbf{Method} & \textbf{Improvement over BC (\%)} \\
        \midrule
        \textbf{Hopper} & CCIL & 32.6\% \\
         & \textbf{\ours} & \textbf{53.2\%} \\
        \midrule
        \textbf{Walker} & \textbf{CCIL} & \textbf{84.1\%} \\
         & \textbf{\ours} & \textbf{84.1\%} \\
        \midrule
        \textbf{Ant} & CCIL & 12.6\% \\
         & \textbf{\ours} & \textbf{84.5\%} \\
        \midrule
        \textbf{HalfCheetah} & CCIL & 5.4\% \\
         & \textbf{\ours} & \textbf{418.7\%} \\
        \bottomrule
    \end{tabular}
    \caption{\textbf{Relative improvement over BC compared to CCIL.} Comparing \ours against CCIL on standard MuJoCo benchmarks. \ours achieves higher or equal relative gains across all tasks.}
    \label{tab:ccil_comparison}
\end{table}

As shown in Table~\ref{tab:ccil_comparison}, we evaluate our method against CCIL, a baseline that explicitly induces smoothness. We use the reported scores in the CCIL paper, and compare percent improvement over BC. Observe that \ours outperforms CCIL significantly on three out of four environments, with a particularly large margin on HalfCheetah ($418.7\%$ vs $5.4\%$ improvement).

\subsubsection{Distance Metric Sensitivity}
\begin{table}[h!]
    \centering
    
    \begin{tabular}{lc}
        \toprule
        \textbf{Method} & \textbf{Success Rate (\%)} \\
        \midrule
        BC & 47 \\
        \textbf{\ours (Cosine Similarity)} & 70 \\
        \textbf{\ours (Euclidean Distance)} & \textbf{75} \\
        \bottomrule
    \end{tabular}
    \caption{\textbf{Distance Metric Sensitivity.} Comparison of success rates using R3M features. \ours with Euclidean distance and cosine similarity perform similarly, beating BC by 28\% and 23\% respectively. Results are on the Robosuite Stack task.}
    \label{tab:metric_ablation}
\end{table}

While all experiments performed in this paper use Euclidean distance to choose nearby neighbors, it is natural to consider alternative metrics, like cosine similarity, especially in high-dimensional embeddings such as R3M. We find (as shown in Table~\ref{tab:metric_ablation}) that \ours performs similarly -- only 5\% worse -- when using cosine similarity rather than Euclidean distance. This suggests \ours is robust to the choice of distance metric used.

\subsubsection{\ours in Combination With Diffusion Policy}
\begin{table}[h!]
    \centering
    % \resizebox{\textwidth}{!}{
    \begin{tabular}{lccc}
        \toprule
        \textbf{Method} & \textbf{Stack} & \textbf{Threading} & \textbf{Peg Insertion} \\
        \midrule
        BC (MLP) & 34 & 12 & 28 \\
        DARP (MLP) & 62 & 38 & 40 \\
        \midrule
        Diffusion & 52 & 44 & 54 \\
        \textbf{DARP w/ Diffusion} & \textbf{82} & \textbf{72} & \textbf{66} \\
        \bottomrule
    \end{tabular}
    % }
    \caption{\textbf{\ours in combination with diffusion policy.} DARP provides significant improvements when applied to both standard MLP policies and Diffusion policies. Results are on the Robosuite Stack, Threading, and Peg Insertion tasks as success rate across 50 trials. States are state-based. Note that the number of demonstrations used is much less than that used in Table~\ref{tab:robo_results}, so baseline BC and \ours numbers do not match.}
    \label{tab:diffusion_ablation}
\end{table}
While all reported experiments are performed with an MLP backbone, diffusion policy \citep{chi2024diffusionpolicy} has proven to be a state-of-the-art model class for imitation learning, particularly for manipulation tasks. Table~\ref{tab:diffusion_ablation} reveals that \ours is not mutually exclusive with diffusion and can be combined for an even more performant model. Using the \ours architecture with a diffusion backbone outperforms \ours with an MLP backbone by 20\%, 34\%, and 26\% success rate increase for all three tasks respectively, beating standard BC by a total 48\%, 60\%, and 38\% for the three tasks.

\begin{table}[h!]
    \centering
    \begin{tabular}{lcc}
        \toprule
        \textbf{Method} & \textbf{Training (s/epoch)} & \textbf{Inference (s/step)} \\
        \midrule
        Diffusion & 5.610 & 0.15700 \\
        \textbf{\ours (MLP)} & \textbf{0.559} & \textbf{0.00437} \\
        \bottomrule
    \end{tabular}
    \caption{\textbf{Computational efficiency of \ours with an MLP backbone and diffusion policy.} Comparison of runtime costs between \ours (MLP) and diffusion policy. Diffusion policy is significantly more expensive, being $\approx 10\times$ slower in training and $\approx 36\times$ slower during inference.}
    \label{tab:diffusion_speed}
\end{table}

Additionally, we empirically find that \ours with an MLP backbone is much faster than standard diffusion, particularly in inference -- see Table~\ref{tab:diffusion_speed}.

\subsubsection{\ours Trained on Human Demonstrations}
\begin{table}[h!]
    \centering
    \begin{tabular}{lc}
        \toprule
        \textbf{Method} & \textbf{Success Rate (\%)} \\
        \midrule
        BC & 45 \\
        \textbf{\ours} & \textbf{60} \\
        \bottomrule
    \end{tabular}
    \caption{\textbf{Results with human demonstrations.} Comparison of success rates when training on human data rather than data collected by RL policies. Results are on the Robosuite Stack task.}
    \label{tab:human_demos}
\end{table}
The expert demonstrations used to train models for the MuJoCo and Robosuite environments are collected by an optimal Reinforcement Learning policy. It is crucial to ensure \ours maintains a performance gain in comparison to BC when trained on expert demonstrations collected by humans. Indeed, when trained on human demonstrations on the Robosuite Stack task, \ours outperforms standard BC by 15\% (see Table~\ref{tab:human_demos}).

\subsubsection{Choice of Retrieval Hyperparameters}
\begin{table}[H]
    \centering
    \begin{tabular}{llc}
        \toprule
        \textbf{Environment} & \textbf{Method} & \textbf{Score (Mean)} \\
        \midrule
        \textbf{Hopper} & BC & 507.49 \\
         & DARP & \textbf{805.69} \\
        \midrule
        \textbf{Stack} & BC & 0.12 \\
         & DARP & \textbf{0.31} \\
        \bottomrule
    \end{tabular}
    \caption{\textbf{Performance comparison when validation loss is used to select training epochs and retrieval hyperparameters.} Mean scores on Hopper and Stack environments. Observe that \ours maintains a performance gain in comparison to BC.}
    \label{tab:val_results}
\end{table}

Figure~\ref{fig:retrieval_params_sweep} reveals that there are selections of retrieval parameters (for example, a very low number of neighbors) which cause \ours to perform worse than standard BC. However, we find that choosing retrieval hyperparameters that minimize validation loss is an effective strategy to find performant settings, see Table~\ref{tab:val_results}.

\subsection{Pseudocode}
\label{app:pseudocode}

We provide pseudocode of the \ours algorithm, see Algorithm~\ref{alg:danbc}.

\begin{algorithm}
\caption{Difference-Aware Retrieval Policies}
\label{alg:danbc}
\begin{algorithmic}[1]
% \vspace{-20pt}
    \STATE \textbf{Input:} Expert demonstrations $\mathcal{D}^*=\{(s^*_j, a^*_j)\}$, number of neighbors $k$
    \STATE \textbf{Initialize:} $f$ parameters $\theta$
    \IF{$g$ is parametric}
        \STATE \textbf{Initialize:} $g$ parameters $\psi$
    \ENDIF
    \STATE \texttt{// Training Loop}
    \WHILE{not converged}
        \STATE Sample batch of query data $(s_q^*, a_q^*) \sim \mathcal{D}^*$
        \FOR{each query pair $(s_q^*, a_q^*)$ in batch}
            \STATE \texttt{// Find $k$-Nearest Neighbors from the entire dataset $\mathcal{D}^*$}
            \STATE $\mathcal{N}_k(s_q^*) \leftarrow \arg\min\text{-}k_{j} d(s_q^*, s^*_j)$
            \STATE \texttt{// Compute Neighbor-based Predictions}
            \FOR{each neighbor index $i \in \mathcal{N}_k(s_q^*)$}
                \STATE $a^\prime_i \leftarrow f_\theta(s_i^*, a_i^*, s_i^* - s_q^*)$
            \ENDFOR
            \STATE \texttt{// Aggregate Predictions}
            
            \IF{$g$ is parametric}
                \STATE $\hat{a}_q \leftarrow g_{\psi}(\{a^\prime_i\}_{i \in \mathcal{N}_k(s_q^*)})$
            \ELSE
                \STATE $\hat{a}_q \leftarrow g(\{a^\prime_i\}_{i \in \mathcal{N}_k(s_q^*)})$
            \ENDIF
        \ENDFOR
        \STATE \texttt{// Update Parameters based on the batch loss}
        \STATE $\mathcal{L} \leftarrow \sum_{(s_q^*, a_q^*) \in \text{batch}} \|\hat{a}_q - a_q^*\|^2$
        \STATE \texttt{// Gradient descent step}
        \STATE $\theta \leftarrow \theta - \alpha\nabla_\theta\mathcal{L}$
        \IF{$g$ is parametric}
            \STATE $\psi \leftarrow \psi - \alpha\nabla_{\psi}\mathcal{L}$
        \ENDIF
    \ENDWHILE
    \STATE \textbf{Output:} Trained parameters $\theta$ and, if applicable, $\psi$
\end{algorithmic}
\end{algorithm}

\subsection{Runtime Analysis}
\label{app:runtime}
\begin{table}[H]
    \centering
    % \resizebox{\textwidth}{!}{
    \begin{tabular}{lccc}
        \toprule
        Environment & $k$ & Train (s/epoch) & Test (s/step) \\
        \midrule
        \textbf{Hopper (1,000 datapoints)} & BC & 0.102 & 0.00127 \\
         & 100 & 0.126 & 0.00413 \\
         & 200 & 0.128 & 0.00418 \\
         & 300 & 0.130 & 0.00420 \\
         & 400 & 0.131 & 0.00421 \\
         & 500 & 0.130 & 0.00419 \\
        \midrule
        \textbf{Stack (4,200 datapoints)} & BC & 0.431 & 0.00139 \\
         & 100 & 0.556 & 0.00437 \\
         & 250 & 0.539 & 0.00442 \\
         & 500 & 0.559 & 0.00437 \\
         & 750 & 0.576 & 0.00433 \\
         & 1000 & 0.591 & 0.00478 \\
         & 1500 & 0.682 & 0.00452 \\
         & 2000 & 0.758 & 0.00451 \\
        \bottomrule
    \end{tabular}
    % }
    \caption{\footnotesize{\textbf{Runtime comparison across environments.} Training and testing speeds (in seconds) for Behavior Cloning (BC) and varying values of $k$ on Hopper and Stack datasets.}}
    \label{tab:runtime_stats}
\end{table}

As shown in Table~\ref{tab:runtime_stats}, we analyze the computational overhead of \ours at both training-time and inference-time. Our analysis indicates that computation cost scales sub-linearly as $k$ increases, and that inference time is tractable for real-time robotic application. For example, on the Robosuite Stack task at $k = 500$, inference takes approximately 0.00437 seconds per step on our hardware, corresponding to a control frequency higher than 230 Hz.

\end{document}